\newcommand\reallywidehat[1]{%
\savestack{\tmpbox}{\stretchto{%
  \scaleto{%
    \scalerel*[\widthof{\ensuremath{#1}}]{\kern.1pt\mathchar"0362\kern.1pt}%
    {\rule{0ex}{\textheight}}%WIDTH-LIMITED CIRCUMFLEX
  }{\textheight}% 
}{2.4ex}}%
\stackon[-6.9pt]{#1}{\tmpbox}%
}
\newtheorem{definition}{Definition}
\newtheorem{theorem}{Theorem}
\newtheorem{lemma}{Lemma}
\newtheorem{proposition}{Proposition}
\begin{document}

% If your paper is accepted and the title of your paper is very long,
% the style will print as headings an error message. Use the following
% command to supply a shorter title of your paper so that it can be
% used as headings.
%
%\runningtitle{I use this title instead because the last one was very long}

% If your paper is accepted and the number of authors is large, the
% style will print as headings an error message. Use the following
% command to supply a shorter version of the authors names so that
% they can be used as headings (for example, use only the surnames)
%
%\runningauthor{Surname 1, Surname 2, Surname 3, ...., Surname n}

\twocolumn[

\aistatstitle{Covariate-informed Representation Learning \\to Prevent Posterior Collapse of iVAE}

\aistatsauthor{ Young-geun Kim \And Ying Liu \And Xuexin Wei}

\aistatsaddress{Department of Psychiatry and\\ Department of Biostatistics\\ Columbia University \And  Department of Psychiatry and\\ Department of Biostatistics\\ Columbia University \And Department of Neuroscience and\\ Department of Psychology\\ The University of Texas at Austin} ]

\begin{abstract}
The recently proposed identifiable variational autoencoder (iVAE) framework provides a promising approach for learning latent independent components (ICs). iVAEs use auxiliary covariates to build an identifiable generation structure from covariates to ICs to observations, and the posterior network approximates ICs given observations and covariates. Though the identifiability is appealing, we show that iVAEs could have local minimum solution where observations and the approximated ICs are independent given covariates.
-- a phenomenon we referred to as the posterior collapse problem of iVAEs. To overcome this problem, we develop a new approach, covariate-informed iVAE (CI-iVAE) by considering a mixture of encoder and posterior distributions in the objective function. In doing so, the objective function prevents the posterior collapse, resulting latent representations that contain more information of the observations. Furthermore, CI-iVAEs extend the original iVAE objective function to a larger class and finds the optimal one among them, thus having tighter evidence lower bounds than the original iVAE. Experiments on simulation datasets, EMNIST, Fashion-MNIST, and a large-scale brain imaging dataset demonstrate the effectiveness of our new method.
\end{abstract}

\section{Introduction}\label{sec:introduction}

Representation learning aims to identify low-dimensional latent representation that can be used to infer the structure of data generating process, to cluster observations by semantic meaning, and to detect anomalous patterns \citep{bengio2009learning, bengio2013representation}.
Recent progress on computer vision has shown that deep neural networks are effective in learning representations from rich high-dimensional data \citep{lecun2015deep}.
Now there is emerging interest in utilizing deep neural networks in scientific exploratory analysis to learn the representation of high-dimensional genetic/brain imaging data associated with phenotypes \citep{huang2017modeling, pinaya2019using, han2019variational, qiang2020deep, kim2021representation, lu2021deep}.
In these scientific applications, autoencoder (AE,~\citealt{bengio2007greedy}) and variational autoencoder (VAE,~\citealt{kingma2014auto}) are popular representation learning methods. Compared to VAEs, the recently proposed identifiable VAE (iVAE,~\citealt{khemakhem2020variational}) provides appealing properties for scientific data analysis \citep{zhou2020learning, schneider2022learnable}: (i) the identifiability of the learned representation; (ii) the representations are assumed to be associated with observed covariates. For example, in human health researches, (i) and (ii) are essential to identify latent independent components and learn representations associated with gender, age, and ethnicity, respectively.
However, we find that when applying iVAEs to various datasets including a human brain imaging dataset, iVAEs sometimes converge to a bad local optimum where representations depend only on covariates (e.g., age, gender, and disease type), thus it is not a good representation for the observations (e.g., genetics, brain imaging data). Therefore, it is necessary to modify the objective function of iVAEs to learn better representations for scientific applications.

VAE consists of two networks: (i) the decoder network maps the prior latent independent components (ICs) to generate observations; (ii) the encoder network approximates the distribution of ICs given observations. iVAEs extend VAEs by introducing auxiliary covariates into the encoder and prior distributions, and construct identifiable data generation processes from covariates to ICs to observations. Inspired by the iVAE framework, recently, some identifiable generative models have been proposed. \cite{kong2022partial} and \cite{wang2022causal} founded identifiable generative models for domain-adaptation and causal inference, respectively. \cite{zhou2020learning} extended the data generation structure of iVAEs from continuous observation noise cases to Poisson. \cite{sorrenson2020disentanglement} explicitly imposed the inverse relation between encoders and decoders via general incompressible-flow networks (GIN), volume-preserving invertible neural networks.
In this work, we focus on a problem of the current iVAEs implementation that could converge to a bad local optimum that lead to uninformative representations, and we propose a new approach that solves this problem by modifying objective functions.

Though the identifiability is appealing, we observe that representations from iVAEs ignore observations in many cases with experiments. 
The main reason is the Kullback-Leibler (KL) term in the evidence lower bounds (ELBOs,~\citealt{bishop2006pattern}) enforcing the posterior distribution of iVAEs to be prior distributions.
This phenomenon is similar to the posterior collapse problem of VAEs where estimated ICs by encoders are independent of observations \citep{bowman2015generating, lucas2019don, he2019lagging, dai2020usual}. A detailed review on the posterior collapse problem is provided in Section \hyperref[sec2.2]{2.2}. We extend the notion of posterior collapse problem of VAEs to formulate this undesirable property of iVAEs, and coin it as the posterior
collapse problem of iVAEs. With the formulation, we theoretically derive that iVAEs occur this problem under some conditions.

To overcome the limitation of iVAEs, we have developed a new method, the Covariate-Informed Identifiable VAE (CI-iVAE). Our new method leverages encoders in addition to the original posterior distribution considered in the previous iVAE to derive a new family of objective functions (ELBOs) for model fitting.\footnote{We distinguish encoders $q_{\phi}(z|x)$ and posterior $q_{\phi}(z|x,u)$ to avoid confusion where $z$, $x$, $u$, and $\phi$ indicate representations, observations, covariates, and network parameters, respectively. The posterior networks in iVAEs are different from encoders in usual VAEs.}
Crucially, in doing so, our objective function prevents the posterior collapse by modifying the KL term. CI-iVAEs extend the iVAE objective function to a larger class and finds the samplewise optimal one among them.

We demonstrate that our method can more reliably learn features of various synthetic datasets, two benchmark image datasets, EMNIST \citep{cohen2017emnist} and Fashion-MNIST \citep{xiao2017fashion}, and a large-scale brain imaging dataset for adolescent mental health research. Especially, we apply our method and iVAEs to a brain imaging dataset, Adolescent Brain Cognitive Development (ABCD) study \citep{jernigan2018adolescent}.\footnote{The ABCD dataset can be found at \url{https://abcdstudy.org}, held in the NIMH Data Archive (NDA).}
Our real data analysis on the ABCD dataset is the first application of identifiable neural networks in human brain imaging. Our method successfully learns representations from brain imaging data associated with the characteristics of subjects while the iVAE will learn non-informative representation due to posterior collapse problem on this real-data application.

Our contributions can be summarized as follows:
\begin{itemize}
\item We formulate the posterior collapse problem of iVAEs and derive that iVAEs may learn collapsed posteriors.
\item We propose CI-iVAEs to learn better representations than iVAE by modifying the ELBO to prevent the posterior collapse.
\item Experiments demonstrate that our method out performed iVAEs by preventing the posterior collapse problem.
\item Our work is the first to learn ICs of human brain imaging with identifiable generative models.
\end{itemize}

All proofs of theoretical results are provided in Appendix \hyperref[app.A]{A}. Implementation details are provided in Appendix \hyperref[app.B]{B}.

\section{Related Prior Work}\label{sec2}

\subsection{Generative Autoencoders}\label{sec2.1}
Generative autoencoders are one of the prominent directions for representation learning \citep{higgins2016beta}. They usually describe data generation processes with joint distributions of latent ICs and observations \citep{kingma2014semi}, and optimize reconstruction error with penalty terms \citep{kingma2014auto}. The reconstruction error is a distance between observations and their reconstruction results by encoders and decoders. For example, ELBO, the objective function of VAEs is a summation of the reconstruction probability (negative reconstruction error) and the KL divergence between encoder and prior distributions.

In the cases where auxiliary covariates are available, many conditional generative models have been proposed to incorporate covariates in generators in addition to latent variables. In conditional VAEs \citep{sohn2015learning} and conditional adversarial AEs \citep{makhzani2015adversarial}, covariates are feed-forwarded by both encoder and decoder. Auxiliary classifiers for covariates are often applied to learn representations that can generate better results \citep{kameoka2018acvae}.
The aforementioned methods have shown prominent results, but their models can learn the distribution of observations with many different prior distributions of latent variables, i.e., they are not \textit{identifiable} \citep{khemakhem2020variational}.

\subsection{Posterior Collapse}\label{sec2.2}
The representations by VAEs are often poor due to the posterior collapse, which has been pointed as a practical drawback of VAEs and their variations \citep{kingma2016improved, yang2017improved, dieng2019avoiding}.
The posterior collapse refers to the phenomenon that the posterior converges to the prior. In this case, approximated ICs are independent of observations, i.e., representations lose the information in observations. One reason for this is that the KL divergence term in the ELBO enforces the encoders to be close to prior distributions \citep{huang2018improving, razavi2018preventing}.

A line of works has focused on posterior distributions and the KL term to alleviate this issue. \cite{he2019lagging} aggressively optimized the encoder whenever the decoder is updated, \cite{kim2018semi} introduced stochastic variational inference \citep{hoffman2013stochastic} to utilize instance-specific parameters for encoders, and \cite{fu2019cyclical} monotonically increased the coefficient of the KL term from zero to ensure that posteriors are not collapsed in early stages.
However, \cite{dai2020usual} derived that VAEs sometimes have lower values of ELBOs than posterior collapse cases. It means that the surface of ELBOs naturally results in a bad local optima, posterior collapse cases. We will extend this theory, and show that iVAEs may result in local optima with collapsed posteriors, in which case the representations are independent to observations given covariates in Section \hyperref[sec3.2]{3.2}.
Recently, \cite{wang2021posterior} prevented the posterior collapse of VAEs with the latent variable identifiability, which refers to that distributions identify \textit{latent variables} for given model parameters. It is important to note that latent variable identifiability is different from the model identifiability as discussed in the iVAE framework. The latter refers to that distributions identify \textit{model parameters}. For brevity, we will use the term identifiability to refer to the model identifiability in this paper.

\section{Proposed Method}\label{sec3}
\subsection{Preliminaries}
\subsubsection{Basic Notations and Assumptions}
We denote observations, covariates, and latent variables by $X \in \mathbb{R}^{d_{X}}$, $U \in \mathbb{R}^{d_{U}}$, and $Z \in \mathbb{R}^{d_{Z}}$, respectively. The dimension of latent variables is lower than that of observations, i.e., $d_{Z}<d_{X}$. For a given random variable (e.g., $Z$), its realization and probability density function (p.d.f.) is denoted by lower case (e.g., $z$) and $p$ (e.g., $p(z)$), respectively.
We distinguish encoder and posterior distributions and denote them by $q_{\phi}(z|x)$ and $q_{\phi}(z|x,u)$, respectively.

\subsubsection{Identifiable Variational Autoencoders}\label{sec3.1.3}
The identifiability is an essential property to recover the true data generation structure and to conduct correct inference \citep{lehmann2005testing, casella2021statistical}. A generative model is called identifiable if the distribution of generation results identifies parameters \citep{rothenberg1971identification, koller2009probabilistic}.
The iVAE framework provides an appealing approach for learning latent ICs. The iVAE assumes the following data generation structure: \begin{equation}\label{eq:iVAE_data_generation}
    \left\{
\begin{array}{l}
      Z|U \sim p_{T_{0}, \lambda_{0}}(z|u)\\
      X = f_{0}(Z) + \epsilon
      \end{array}
\right.
\end{equation}
where $Z$ denotes the IC (or \textit{source}) and $f_{0}$ denotes the nonlinear mixing function.
Here, $p_{T_{0}, \lambda_{0}}$ is a conditionally factorial exponential family distribution with sufficient statistics $T_{0}$ and natural parameters $\lambda_{0}$, and $\epsilon$ is an observation noise. The iVAE models the mixing function with neural networks, a flexible nonlinear model, and its generation process is identifiable under certain conditions. Key components of iVAEs include label prior, decoder, and posterior networks. The label prior and decoder, respectively, models the conditional distribution of latent variables given covariates, $p_{T_{0}, \lambda_{0}}(z|u)$, and observations given latent variables, $p_{f_{0}}(x|z)$. The posterior networks $q_{\phi}(z|x,u)$ approximates the posterior distribution of the ICs, $p_{f_{0}, T_{0}, \lambda_{0}}(z|x,u)$. Again, we distinguish encoders $q_{\phi}(z|x)$ and posteriors $q_{\phi}(z|x,u)$ to avoid confusion. The posterior networks in iVAEs approximate distributions of ICs given observations and covariates, which is different from encoders in usual VAEs.

An essential condition on the label prior to ensure the identifiability is the conditionally factorial exponential family distribution assumption. The label prior network is denoted by $p_{T, \lambda}(z|u)$ and can be expressed as $p_{T, \lambda}(z|u)=\prod_{i=1}^{d_{Z}}p_{T_{i}, \lambda_{i}}(z_{i}|u)$ where $p_{T_{i}, \lambda_{i}}(z_{i}|u)=\exp\big( \lambda_{i}(u) \cdot T_{i}(z_{i}) -A(u)+B(z_{i})\big)$ is the exponential family distribution with parameters $\lambda_{i}(u)$, sufficient statistics $T_{i}(z_{i})$, and known functions $A$ and $B$. We denote $T:=(T_{1},\dots,T_{d_{Z}})$ and $\lambda:=(\lambda_{1},\dots,\lambda_{d_{Z}})$. The decoder network is denoted by $p_{f}(x|z):=p(\epsilon=x-f(z))$ where $f$ is the modeled mixing function and $p(\epsilon)$ is the p.d.f. of noise variables $\mathcal{E}$. With label prior and decoder networks, the data generation process can be expressed as $p_{\theta}(x,z|u):= p_{f}(x|z)p_{T,\lambda}(z|u)$ where $\theta = (f, T, \lambda)$ is all the parameters for the data generation process.
The posterior network is denoted by $q_{\phi}(z|x, u)$. The encoder can be estimated by $q_{\phi}(z|x)=\int q_{\phi}(z|x,u)p(u|x)du$ or separately modeled. In the implementation, iVAEs model $p_{T,\lambda}(z|u)$ and $q_{\phi}(z|x,u)$ with Gaussian distributions.

\cite{khemakhem2020variational} defined the model identifiability of the generation process by $p_{\theta}(x,z|u)$.
\begin{definition}\label{def1} (Identifiability,~\citealt{khemakhem2020variational}) The $p_{\theta}(x,z|u)$ is called \textit{identifiable} if the following holds: for any $\theta=(f, T, \lambda)$ and $\tilde{\theta}=(\tilde{f}, \tilde{T}, \tilde{\lambda})$, $p_{\theta}(x|u)=p_{\tilde{\theta}}(x|u)$ implies $\theta \sim \tilde{\theta}$. Here, $\theta \sim \tilde{\theta}$ is defined as $T(f^{-1}(x))=\tilde{T}(\tilde{f}^{-1}(x))$ up to a invertible affine transformation.
\end{definition}
With the identifiability, finding the maximum likelihood estimators (MLEs) implies learning the true mixing function and ICs in \eqref{eq:iVAE_data_generation}.
\cite{khemakhem2020variational} showed that the identifiability holds and the affine transformation is component-wise one-to-one transformations as in ICA if $\lambda$ can make invertible matrix $(\lambda(u_{1})-\lambda(u_{0}), \dots, \lambda(u_{nk})-\lambda(u_{0}))$ with some $nk+1$ distinct $u_{0},\dots,u_{nk}$, and some mild conditions hold.

The objective function of iVAEs, ELBO is with respect to (w.r.t.) the conditional log-likelihood of observations given covariates $\log p_{\theta}(x|u)$. The ELBO can be expressed as
\begin{equation}\label{eq:iVAE_objective}
    \mathbb{E}_{q_{\phi}(z|x,u)}\log p_{f}(x|z) - \mathcal{D}_{\text{KL}}(q_{\phi}(z|x,u)||p_{T,\lambda}(z|u))
\end{equation}
which equals to $\log p_{\theta}(x|u) - \mathcal{D}_{\text{KL}}(q_{\phi}(z|x,u)||p_{\theta}(z|x,u))$. When the space of $q_{\phi}(z|x,u)$ includes $p_{\theta}(z|x,u)$ for any $\theta$, \eqref{eq:iVAE_objective} can approximate $\log p_{\theta}(x|u)$, which justifies that iVAEs learn the ground-truth data generation structure \citep{khemakhem2020variational}. However, we show in the next section that Equation (3) has a bad local optimum due to the KL term and iVAEs sometimes yield representations depending only on covariates by converging to this local optimum.

\subsection{Motivation}\label{sec3.2}
In this section, we describe how the objective function used in previous iVAEs produces bad local optimums for posterior $q_{\phi}(z|x,u)$. We further show how modifying the objective function with encoders $q_{\phi}(z|x)$ can alleviate this problem.

We first describe the posterior collapse problem of VAEs, and then formulate the bad local optimums of \eqref{eq:iVAE_objective}. In the usual VAEs using only observations, the objective function is $\mathbb{E}_{q_{\phi}(z|x)}\log p_{f}(x|z) - \mathcal{D}_{\text{KL}}(q_{\phi}(z|x)||p(z))$ where $p(z)$ is the prior distribution. The posterior collapse of VAEs can be expressed as $q_{\phi}(z|x)=p(z)$, and one reason of this phenomenon is the KL term enforcing $q_{\phi}(z|x)$ be close to $p(z)$.
Similarly, the KL term in \eqref{eq:iVAE_objective} enforces posterior distributions $q_{\phi}(z|x,u)$ to be close to label prior $p_{T,\lambda}(z|u)$.
We extend the notion of the posterior collapse of VAEs to formulate a bad local solution of \eqref{eq:iVAE_objective}, and coin it as the posterior collapse problem of iVAEs.
\begin{definition} (Posterior collapse of iVAEs) For a given dataset $\{(x_{i}, u_{i})\}_{i=1}^{n}$, we call the posterior $q_{\phi}(z|x,u)$ in iVAEs is collapsed if $q_{\phi}(z|x_{i}, u_{i})=p_{T, \lambda}(z|u_{i})$ holds for all $i=1,\dots,n$.
\end{definition}
In the following, we use the term posterior collapse to refer the posterior collapse of iVAEs. Under the posterior collapse, approximated ICs are independent of observations given covariates, i.e., we lose all the information in observations independent of covariates.
Furthermore, we derive that the posterior collapse is a local optimum of \eqref{eq:iVAE_objective} under some conditions, which is consistent with that the posterior collapse problem of VAEs is a local optimum of the objective function of VAEs \citep{dai2020usual}. In the following theorem, we assume two conditions formulated by \cite{dai2020usual}: $(C1)$ the derivative of reconstruction error is Lipschitz continuous and $(C2)$ the reconstruction error is an increasing function w.r.t. the uncertainty of latent variables. A detailed formulation of $(C1)$ and $(C2)$ is provided in Appendix A.
\begin{theorem}
Let $D=\{(x_{i}, u_{i})\}_{i=1}^{n}$ be samples from \eqref{eq:iVAE_data_generation} when $\epsilon \sim N(0, \gamma I)$ and the loss be the negative expectation of \eqref{eq:iVAE_objective} over $D$. For any iVAEs satisfying $(C1)$ and $(C2)$, there is a posterior collapse case whose loss value is lower than that of the iVAEs when $\gamma$ is sufficiently large.
\end{theorem}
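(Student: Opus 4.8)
The plan is to adapt the VAE argument of \cite{dai2020usual} to the conditional iVAE objective. First I would insert the Gaussian decoder $p_{f}(x|z)=N(x;f(z),\gamma I)$ into \eqref{eq:iVAE_objective}, so that the per-sample negative ELBO reads
\[
\frac{1}{2\gamma}\mathbb{E}_{q_{\phi}(z|x_{i},u_{i})}\|x_{i}-f(z)\|^{2}+\frac{d_{X}}{2}\log(2\pi\gamma)+\mathcal{D}_{\text{KL}}\big(q_{\phi}(z|x_{i},u_{i})\,\|\,p_{T,\lambda}(z|u_{i})\big).
\]
Averaging over $D$ gives the loss, which decomposes into a reconstruction term carrying the prefactor $1/\gamma$, a model-independent constant $\tfrac{d_{X}}{2}\log(2\pi\gamma)$, and the KL term. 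The key observation is that inflating $\gamma$ damps only the reconstruction term, leaving the KL term to govern the comparison; since the KL term attains its minimum value zero exactly at posterior collapse, collapse should become the cheaper configuration once $\gamma$ is large.

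Next I would construct an explicit collapse competitor from the given iVAE by keeping its decoder $f$ and label prior $p_{T,\lambda}$ but replacing the posterior network with one satisfying $q_{\phi}(z|x_{i},u_{i})=p_{T,\lambda}(z|u_{i})$ for every $i$ (achievable by zeroing the weights that feed $x$ into the posterior head). This is a legitimate posterior-collapse case whose KL term vanishes. Subtracting its loss from that of the original iVAE yields
\[
\bar{L}_{\text{orig}}-\bar{L}_{\text{collapse}}=\frac{1}{2\gamma n}\sum_{i}\Big(\mathbb{E}_{q_{\phi}(z|x_{i},u_{i})}\|x_{i}-f(z)\|^{2}-\mathbb{E}_{p_{T,\lambda}(z|u_{i})}\|x_{i}-f(z)\|^{2}\Big)+\frac{1}{n}\sum_{i}\mathcal{D}_{\text{KL},i},
\]
where the KL sum is a fixed positive constant whenever the original posterior is not already collapsed.

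It then remains to show the reconstruction difference is negligible for large $\gamma$. Expanding the squared norm with $x_{i}=f_{0}(z_{i})+\epsilon_{i}$ cancels the dominant noise energy $\|\epsilon_{i}\|^{2}$ between the two expectations, so the difference reduces to a cross term $2\epsilon_{i}^{\top}\big(\mathbb{E}_{p_{T,\lambda}}f(z)-\mathbb{E}_{q_{\phi}}f(z)\big)$ plus a residual $\mathbb{E}_{q_{\phi}}\|f_{0}(z_{i})-f(z)\|^{2}-\mathbb{E}_{p_{T,\lambda}}\|f_{0}(z_{i})-f(z)\|^{2}$. Condition $(C2)$ makes this residual bounded and nonpositive, since collapsing to the prior can only raise latent uncertainty and hence the reconstruction error, while $(C1)$ bounds $\|\mathbb{E}_{p_{T,\lambda}}f(z)-\mathbb{E}_{q_{\phi}}f(z)\|$; because $\|\epsilon_{i}\|$ is of order $\sqrt{\gamma}$, the cross term is of order $\sqrt{\gamma}$, and the $1/\gamma$ prefactor sends the whole reconstruction difference to zero as $\gamma\to\infty$. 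Hence $\bar{L}_{\text{orig}}-\bar{L}_{\text{collapse}}\to\frac{1}{n}\sum_{i}\mathcal{D}_{\text{KL},i}>0$, so the collapse competitor has strictly smaller loss once $\gamma$ is sufficiently large.

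I expect the main obstacle to be exactly this last control: verifying that the noise-driven cross term grows no faster than $\sqrt{\gamma}$ and is therefore annihilated by the $1/\gamma$ prefactor, which is precisely where $(C1)$ (to bound the mean discrepancy of $f$ under the two latent laws) and $(C2)$ (to bound the residual) must enter. A secondary subtlety is the degenerate situation in which the supplied iVAE is itself already collapsed, so that $\sum_{i}\mathcal{D}_{\text{KL},i}=0$; there the conclusion should be read as ``no larger than'', or one simply restricts attention to iVAEs with non-collapsed posteriors.
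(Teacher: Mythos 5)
Your high-level strategy --- keep the decoder, swap the posterior for the label prior, and argue that the $1/\gamma$ prefactor kills the reconstruction discrepancy while the KL gain survives --- is genuinely different from the paper's proof, but it has two concrete gaps. First, your treatment of the $\gamma$-dependence is internally inconsistent. You invoke $\|\epsilon_{i}\|=O(\sqrt{\gamma})$, i.e.\ you let the data change with $\gamma$, yet you simultaneously treat $\frac{1}{n}\sum_{i}\mathcal{D}_{\text{KL},i}$ as ``a fixed positive constant.'' It is not: $q_{\phi}(z|x_{i},u_{i})$ is evaluated at the $\gamma$-dependent inputs $x_{i}$, so the KL sum is itself a function of $\gamma$, and nothing in your argument prevents it from shrinking as fast as (or faster than) the $O(1/\sqrt{\gamma})$ reconstruction correction, in which case the comparison fails. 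Conversely, if you fix the dataset once and for all (so the KL sum really is constant), then the expansion $x_{i}=f_{0}(z_{i})+\epsilon_{i}$ with $\|\epsilon_{i}\|\sim\sqrt{\gamma}$ is vacuous, and your third paragraph reduces to ``the bracket is some fixed finite number, so dividing by $\gamma$ kills it'' --- an argument that never genuinely uses $(C1)$ or $(C2)$, which is a strong hint that it is not capturing the content the theorem is after.

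Second, you have the roles of $(C1)$ and $(C2)$ essentially backwards. $(C2)$ is a \emph{lower} bound on $\partial r/\partial\sigma$: it says reconstruction error grows at least linearly in the latent uncertainty, so it can never deliver your claim that the residual is ``bounded''; an upper bound on growth is what $(C1)$ (Lipschitz derivatives, hence at-most-quadratic growth) provides. Nor does $(C2)$ give nonpositivity: the prior and posterior differ in their means, not only their variances, and the posterior variance need not be smaller than the prior's. Likewise, $(C1)$ constrains derivatives of the reconstruction error, not the decoder mean map, so it does not directly bound $\|\mathbb{E}_{p_{T,\lambda}}f(z)-\mathbb{E}_{q_{\phi}}f(z)\|$. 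The paper's proof uses the conditions quite differently: following \cite{dai2020usual}, it reparametrizes the decoder with a scale $w$, builds a Taylor-type surrogate $h^{\text{appr}}$ that equals the loss at the current parameters and upper-bounds it at candidate collapse points ($(C1)$ is what validates this surrogate), and then uses $(C2)$ to show the coefficients $c_{i,j}$ are strictly positive so that the surrogate's global minimizer has $w=0$ with posterior equal to prior. In particular, the paper's competing posterior collapse case degenerates the \emph{decoder} as well as the posterior; this is precisely how it sidesteps having to bound the reconstruction error of an unchanged decoder under the high-variance prior, which is the step your proposal, with the conditions as stated, cannot complete.
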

That is, the objective function of existing iVAEs evaluates posterior collapse cases as better solutions than iVAEs. Roughly speaking, when the noise of the observation, $\gamma$, is large, the KL term in \eqref{eq:iVAE_objective} dominates the first term.

Our method modifies ELBOs to alleviate the posterior collapse problem of iVAEs. We note that the KL term $\mathcal{D}_{\text{KL}}(q_{\phi}(z|x,u)||p_{T, \lambda}(z|u))$ is a main reason of the posterior collapse and change $q_{\phi}(z|x,u)$ to linear mixtures of posterior $q_{\phi}(z|x,u)$ and encoder $q_{\phi}(z|x)$. For a motivating example, we can consider an alternative objective function (which can be shown to be an ELBO): 
\begin{equation}\label{eq:VAE_with_label_prior_objective}
    \mathbb{E}_{q_{\phi}(z|x)}\log p_{f}(x|z) - \mathcal{D}_{\text{KL}}(q_{\phi}(z|x)||p_{T, \lambda}(z|u)).
\end{equation}
It is equivalent to $\log p_{\theta}(x|u) - \mathcal{D}_{\text{KL}}(q_{\phi}(z|x)||p_{\theta}(z|x,u))$. It can be viewed as using encoder $q_{\phi}(z|x)$ to approximate the posterior of label prior distributions $p_{\theta}(z|x,u)$.
We name \eqref{eq:iVAE_objective} and \eqref{eq:VAE_with_label_prior_objective} by ELBOs with $q_{\phi}(z|x,u)$ and with $q_{\phi}(z|x)$, respectively.
We derive that \eqref{eq:VAE_with_label_prior_objective} prevents the posterior collapse problem of iVAEs. We say a label prior $p_{T,\lambda}$ non-trivial if $p_{T,\lambda}(z|u) \neq \int p_{T,\lambda}(z|u)p(u)du$ holds with positive probability w.r.t. $p(u)$, i.e., the label prior does not ignore covariates. All the label prior of iVAEs are non-trivial since $\lambda$ should make invertible matrix $(\lambda(u_{1})-\lambda(u_{0}),\dots,\lambda(u_{nk})-\lambda(u_{0}))$ with some $nk+1$ distinct $u_{0},\dots,u_{nk}$.
\begin{proposition}\label{prop:prevent_posterior_collapse}
For any non-trivial label prior $p_{T,\lambda}$, $q_{\phi}(z|x) \neq p_{T,\lambda}(z|u)$ holds with positive probability w.r.t. $p(x, u)$.
\end{proposition}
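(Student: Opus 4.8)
The plan is to prove the contrapositive: if the encoder coincides with the label prior almost everywhere, i.e. $q_{\phi}(z|x)=p_{T,\lambda}(z|u)$ (as distributions in $z$) for $p(x,u)$-almost every $(x,u)$, then $p_{T,\lambda}$ must be trivial. The whole argument rests on one structural asymmetry: the left-hand side is a function of $x$ alone while the right-hand side is a function of $u$ alone, so an almost-everywhere identity between them should force each to be constant in its own argument, which is precisely what triviality asserts.

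First I would make the support of the joint distribution explicit. Under \eqref{eq:iVAE_data_generation} the joint density factors as $p(x,u)=p(x|u)p(u)$, and since $X=f_{0}(Z)+\epsilon$ with full-support noise (the Gaussian $\epsilon\sim N(0,\gamma I)$ used in the implementation), $p(x|u)=\int p(\epsilon=x-f_{0}(z))\,p_{T_{0},\lambda_{0}}(z|u)\,dz>0$ for every $x\in\mathbb{R}^{d_{X}}$ and every $u$ in the support of $p(u)$. This positivity is the key enabler. Now set $N=\{(x,u):q_{\phi}(\cdot|x)\neq p_{T,\lambda}(\cdot|u)\}$ and suppose $P(N)=0$. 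Writing $P(N)=\int\big(\int \mathbb{1}_{N}(x,u)\,p(x|u)\,dx\big)p(u)\,du$ and using nonnegativity of the integrand, Fubini yields a $p(u)$-full-measure set $U_{0}$ such that for each $u\in U_{0}$ the inner integral vanishes; because $p(x|u)>0$ everywhere, this forces $q_{\phi}(\cdot|x)=p_{T,\lambda}(\cdot|u)$ for Lebesgue-almost every $x$, say on a full-measure set $X_{u}\subseteq\mathbb{R}^{d_{X}}$.

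Next I would intersect these supports across covariate values. For any $u_{1},u_{2}\in U_{0}$ the sets $X_{u_{1}}$ and $X_{u_{2}}$ each have full Lebesgue measure, so $X_{u_{1}}\cap X_{u_{2}}$ is nonempty; choosing $x^{\ast}$ in it gives $p_{T,\lambda}(\cdot|u_{1})=q_{\phi}(\cdot|x^{\ast})=p_{T,\lambda}(\cdot|u_{2})$. Hence $p_{T,\lambda}(\cdot|u)$ is constant in $u$ over the full-measure set $U_{0}$, so it equals its own $p(u)$-average $\int p_{T,\lambda}(\cdot|u')p(u')\,du'$ for $p(u)$-almost every $u$; that is exactly the statement that $p_{T,\lambda}$ is trivial, contradicting the hypothesis and completing the contrapositive.

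I expect the main obstacle to be the measure-theoretic step guaranteeing a common $x^{\ast}$ shared by distinct covariate values. This is where full-support observation noise is essential: it is what makes $p(x|u)$ positive on all of $\mathbb{R}^{d_{X}}$ and hence makes the sets $X_{u}$ overlap. Without it (for instance, degenerate noise or conditional supports that are disjoint across $u$) one could not transport the encoder value between $u_{1}$ and $u_{2}$, and both the argument and the conclusion may fail; I would therefore state this support condition explicitly as the governing assumption, noting that it holds automatically for the Gaussian decoder used throughout the paper.
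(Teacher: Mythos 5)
Your proof is correct, and it reaches the same contradiction as the paper but by a different mechanism. The paper's argument is a one-line marginalization: assuming $q_{\phi}(z|x)=p_{T,\lambda}(z|u)$ holds with probability $1$, it writes $p_{T,\lambda}(z|u)=q_{\phi}(z|x)=\int q_{\phi}(z|x)p(u)\,du=\int p_{T,\lambda}(z|u)p(u)\,du$, which directly contradicts non-triviality. Note, however, that integrating the almost-everywhere identity over $u$ with weight $p(u)$ for a fixed $x$ is only legitimate if the identity holds for $p(u)$-almost every $u$ at that $x$; disintegration of $p(x,u)$ only gives it for $p(u|x)$-almost every $u$, so the paper is implicitly assuming that $p(u|x)$-null sets are $p(u)$-null --- which, via Bayes, is exactly the full-support condition $p(x|u)>0$ that you isolate. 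Your route instead uses Fubini plus full-support noise to extract, for any two covariate values $u_{1},u_{2}$, a common observation $x^{\ast}$ through which the identity is transported, $p_{T,\lambda}(\cdot|u_{1})=q_{\phi}(\cdot|x^{\ast})=p_{T,\lambda}(\cdot|u_{2})$, forcing the label prior to be constant in $u$ and hence equal to its $p(u)$-average. What your version buys is rigor and scope-awareness: you make the governing support assumption explicit, and you correctly observe that without it the proposition as stated can fail (e.g., conditional supports of $x$ disjoint across $u$ admit an encoder that matches a non-trivial label prior almost everywhere). What the paper's version buys is brevity, and its marginalization step avoids your pointwise-selection argument entirely once the absolute-continuity condition is granted. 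Both are sound under the paper's Gaussian-noise generative model.
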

That is, the $q_{\phi}(z|x)$ can not collapse to non-trivial $p_{T,\lambda}(z|u)$ since it uses only observations. However, we derive that \eqref{eq:VAE_with_label_prior_objective} can not approximate $\log p_{\theta}(x|u)$.
\begin{proposition}\label{prop2} We assume that, for any $\theta$, there is $\phi$ satisfying $q_{\phi}(z|x)=p_{\theta}(z|x)$ with probability $1$ w.r.t. $p(x)$. For any $\theta$ forming non-trivial label prior,
\begin{equation*}\resizebox{1.0\hsize}{!}{
    $\begin{aligned}
        &\underset{\phi}{\max}\mathbb{E}_{p(x,u)}\big( \mathbb{E}_{q_{\phi}(z|x)}\log p_{f}(x|z) - \mathcal{D}_{\text{KL}}(q_{\phi}(z|x)||p_{T, \lambda}(z|u)) \big)\\
        &<\mathbb{E}_{p(x,u)}\log p_{\theta}(x|u).
    \end{aligned}$}
\end{equation*}
\end{proposition}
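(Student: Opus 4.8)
The plan is to work from the ELBO identity already recorded in the text: the objective in \eqref{eq:VAE_with_label_prior_objective} equals $\log p_{\theta}(x|u) - \mathcal{D}_{\text{KL}}(q_{\phi}(z|x)\|p_{\theta}(z|x,u))$. Taking $\mathbb{E}_{p(x,u)}$ of both sides, the quantity to be maximized over $\phi$ differs from $\mathbb{E}_{p(x,u)}\log p_{\theta}(x|u)$ by exactly the nonnegative term $\mathbb{E}_{p(x,u)}\mathcal{D}_{\text{KL}}(q_{\phi}(z|x)\|p_{\theta}(z|x,u))$. Hence the proposition is equivalent to the claim that $\inf_{\phi}\mathbb{E}_{p(x,u)}\mathcal{D}_{\text{KL}}(q_{\phi}(z|x)\|p_{\theta}(z|x,u))>0$, i.e. that this gap is bounded away from zero. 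Since minimizing over the parametric family $\{q_{\phi}\}$ can only give a value at least as large as minimizing over all conditional densities $q(z|x)$, it suffices to prove the unconstrained bound $\inf_{q}\mathbb{E}_{p(x,u)}\mathcal{D}_{\text{KL}}(q(z|x)\|p_{\theta}(z|x,u))>0$; the richness assumption that some $\phi$ achieves $q_{\phi}(z|x)=p_{\theta}(z|x)$ is then only needed to certify that the gap is genuinely attributable to non-triviality rather than to a deficient encoder family.

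Next I would characterize when the expected divergence vanishes. Because each $\mathcal{D}_{\text{KL}}$ is nonnegative, $\mathbb{E}_{p(x,u)}\mathcal{D}_{\text{KL}}(q(z|x)\|p_{\theta}(z|x,u))=0$ forces $q(z|x)=p_{\theta}(z|x,u)$ for $p(x,u)$-almost every $(x,u)$. Since the left-hand side does not depend on $u$, this would require $p_{\theta}(z|x,u)$ to be (almost everywhere) free of $u$ over the support of $p(u|x)$ for almost every $x$. The core computation is to show this is impossible for a non-trivial label prior. I would write the posterior explicitly as $p_{\theta}(z|x,u)=p_{f}(x|z)\,p_{T,\lambda}(z|u)/p_{\theta}(x|u)$ and observe that the decoder factor $p_{f}(x|z)$ carries no dependence on $u$. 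Thus, if $p_{\theta}(z|x,u)=p_{\theta}(z|x,u')$ for two values $u,u'$ in the support of $p(\cdot|x)$, the decoder cancels and one obtains $p_{T,\lambda}(z|u)/p_{T,\lambda}(z|u')=p_{\theta}(x|u)/p_{\theta}(x|u')$, whose right-hand side is independent of $z$. Hence the ratio $p_{T,\lambda}(z|u)/p_{T,\lambda}(z|u')$ is constant in $z$; integrating both densities over $z$ forces that constant to equal $1$, so $p_{T,\lambda}(\cdot|u)=p_{T,\lambda}(\cdot|u')$.

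To close the argument I would combine this with non-triviality. Non-triviality supplies a positive-$p(u)$-measure set of covariate values whose label priors differ from the marginal, so there exist $u,u'$ with $p_{T,\lambda}(\cdot|u)\neq p_{T,\lambda}(\cdot|u')$. If there is a positive-$p(x)$-measure set of observations $x$ for which the conditional $p(\cdot|x)$ charges two such covariate values, then on that set the vanishing-divergence condition would force $p_{T,\lambda}(\cdot|u)=p_{T,\lambda}(\cdot|u')$, a contradiction. Consequently $m(x):=\inf_{q(\cdot|x)}\mathbb{E}_{p(u|x)}\mathcal{D}_{\text{KL}}(q(\cdot|x)\|p_{\theta}(\cdot|x,u))$ is strictly positive on a set of positive $p(x)$-measure, whence $\mathbb{E}_{p(x)}m(x)>0$ and the desired strict inequality follows.

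The step I expect to be the main obstacle is this last one: rigorously ruling out the vanishing of the expected divergence requires showing that, under the generative model \eqref{eq:iVAE_data_generation} with continuous noise $\epsilon$, a positive-measure set of $x$ indeed receives contributions from covariates with distinct priors, so that the posterior genuinely varies with $u$. This is a statement about the joint law $p(x,u)$ and the spread of $p(u|x)$, and the measure-theoretic bookkeeping — managing the ``almost every'' qualifiers and the overlap of conditional supports — is where the care is needed; the algebraic ratio cancellation itself is routine.
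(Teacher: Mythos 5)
Your outline follows essentially the same route as the paper's proof: both start from the identity that the objective equals $\log p_{\theta}(x|u)-\mathcal{D}_{\text{KL}}(q_{\phi}(z|x)\|p_{\theta}(z|x,u))$, reduce the claim to strict positivity of the expected KL gap, cancel the decoder factor $p_{f}(x|z)$ through Bayes' theorem, and derive a contradiction with non-triviality of the label prior. (The paper's algebra compares $p_{\theta}(z|x,u)$ with the aggregated posterior $p_{\theta}(z|x)$ and integrates over $x$ to obtain $p_{T,\lambda}(z|u)=p_{\theta}(z)$; your pairwise cancellation in $(u,u')$ followed by integration over $z$ reaches the same contradiction and is equivalent.)

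There is one genuine gap, and it is not the one you flagged. By relaxing the minimization to all conditional densities $q(z|x)$, you must establish $\inf_{q}\mathbb{E}_{p(x,u)}\mathcal{D}_{\text{KL}}(q(z|x)\|p_{\theta}(z|x,u))>0$, but your argument only shows that no $q$ \emph{attains} the value $0$: the step ``consequently $m(x)>0$'' silently assumes the inner infimum is attained, which is exactly what separates ``no $q$ achieves zero'' from ``the infimum is positive.'' The fix is standard but must be supplied: for fixed $x$, one has $\mathbb{E}_{p(u|x)}\mathcal{D}_{\text{KL}}(q\|p_{\theta}(\cdot|x,u))=\mathcal{D}_{\text{KL}}(q\|\tilde{p}_{x})-\log Z_{x}$, where $\tilde{p}_{x}(z)\propto\exp\big(\mathbb{E}_{p(u|x)}\log p_{\theta}(z|x,u)\big)$ and $Z_{x}$ is its normalizer, so $m(x)=-\log Z_{x}$ is attained at $q=\tilde{p}_{x}$ and, by Jensen's inequality, is strictly positive precisely when the posteriors $p_{\theta}(\cdot|x,u)$ genuinely vary with $u$. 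The paper sidesteps this entirely by arguing with the maximizing $\phi$ itself: if the maximum equaled the log-likelihood, that maximizer would witness zero KL, and the Bayes cancellation then yields the contradiction, with no infimum or attainment bookkeeping. By contrast, the obstacle you identified as the main one (overlap of conditional supports) is immediate in this setting: with Gaussian noise $\epsilon$, $p(x|u)>0$ for all $(x,u)$, so $p(u|x)$ has the same support as $p(u)$ for every $x$, and any positive-$p(u)$-measure set of covariates with distinct label priors is charged by $p(\cdot|x)$ for every $x$. A side benefit of your relaxation, once the attainment point is repaired, is that neither the richness assumption on $\{q_{\phi}\}$ nor attainment of the maximum over $\phi$ is actually needed for the strict inequality.
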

Thus, although the ELBO with $q_{\phi}(z|x)$ prevents the posterior collapse problem, it is strictly smaller than $\log p_{\theta}(x|u)$ even with the global optimum of $\phi$. In contrast, the ELBO with $q_{\phi}(z|x,u)$ can approximate the log-likelihood if we find the global optimum based on the data, but it may converge to the posterior collapse cases. Thus, in practice, it is desirable to find a good balance between these two considerations. In the next section, we develop a new method by modifying the ELBOs to achieve such a goal.

\subsection{Covariate-informed Identifiable VAE}
\begin{algorithm}[t]
\label{alg:training}
\caption{Training CI-iVAEs}
\begin{algorithmic}
\STATE \textbf{Input}: Training samples $\{(x_{i}, u_{i})\}_{i=1}^{n}$ and batch size $B$
\STATE \textbf{Output}: CI-iVAEs with ($f^{*}$, $T^{*}$, $\lambda^{*}$, $\phi^{*}$).
\end{algorithmic}
\begin{algorithmic}[1]
\STATE Initialize $(f, T, \lambda, \phi)$
\STATE \textbf{While} $(f, T, \lambda, \phi)$ did not converge \textbf{Do} \\
\STATE \quad Sample $\{(x_{i(b)}, u_{i(b)})\}_{b=1}^{B}$ from training samples
\STATE \quad Calculate samplewise optimal $\alpha^{*}(x_{i(b)}, u_{i(b)})$
\STATE \quad Update $(f, T, \lambda, \phi)$ by ascending $$B^{-1}\sum_{b=1}^{B}\text{ELBO}_{\theta,\phi}(\alpha^{*}(x_{i(b)}, u_{i(b)});x_{i(b)},u_{i(b)})$$
\STATE $(f^{*}, T^{*}, \lambda^{*}, \phi^{*}) \leftarrow (f, T, \lambda, \phi)$
\end{algorithmic}
\end{algorithm}
In this section, we provide our method, CI-iVAE. We use the same network architecture of iVAEs described in Section \hyperref[sec3.1.2]{3.1.2} to inherit the identifiability of the likelihood model. Our key innovation is the development of a new class of objective functions (ELBOs) to prevent the posterior collapse.

We consider mixtures of distributions by encoders $q_{\phi}(z|x)$ and the posterior in the original iVAE, $q_{\phi}(z|x, u)$,
\begin{equation}\label{eq:mixture_post}
    \{\alpha(x,u) q_{\phi}(z|x)+(1-\alpha(x,u))q_{\phi}(z|x,u)| \alpha(x,u) \in [0, 1]\},
\end{equation}
to derive ELBOs avoiding posterior collapse while approximating log-likelihoods. For simplicity, we use $\alpha$ to refer $\alpha(x,u)$, when there is no confusion. Any element in \eqref{eq:mixture_post} can provide a lower bound of log-likelihood. We first formulate a set of ELBOs using \eqref{eq:mixture_post} and then provide our method.
\begin{proposition}\label{prop:ELBO} For any sample $(x, u)$, $\theta=(f, T, \lambda)$, $\phi$, and $\alpha \in [0, 1]$, ELBO$_{\theta,\phi}(\alpha;x, u)$ defined as
\begin{equation*}
\begin{split}
    &\mathbb{E}_{\alpha q_{\phi}(z|x)+(1-\alpha)q_{\phi}(z|x,u)} \log p_{f}(x|z)\\
    &-\mathcal{D}_{\text{KL}}(\alpha q_{\phi}(z|x) + (1-\alpha)q_{\phi}(z|x,u)||p_{T,\lambda}(z|u))
\end{split}
\end{equation*}
is a lower bound of $\log p_{\theta}(x|u)$.
\end{proposition}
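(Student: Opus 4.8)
The plan is to reduce the statement to a single elementary fact: \emph{any} valid probability density over $z$ produces an ELBO of $\log p_{\theta}(x|u)$, and then to note that the mixture in question is such a density. Concretely, set $q_{\alpha}(z) := \alpha\, q_{\phi}(z|x) + (1-\alpha)\, q_{\phi}(z|x,u)$. For $\alpha \in [0,1]$ this is a convex combination of two densities, hence nonnegative and integrating to one, so it is itself a legitimate density in $z$. Moreover, in the Gaussian parametrization used by iVAEs every factor has full support, so $q_{\alpha}$ is absolutely continuous with respect to both the label prior $p_{T,\lambda}(z|u)$ and the exact posterior $p_{\theta}(z|x,u)$; this guarantees the expectations and the KL terms below are well defined. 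The mixture structure itself plays no role beyond ensuring $q_{\alpha}$ is a valid density, which is exactly why the entire family \eqref{eq:mixture_post} will consist of ELBOs.

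The core step is an exact identity valid for any density $q$. Using $p_{\theta}(z|x,u) = p_{f}(x|z)\,p_{T,\lambda}(z|u)/p_{\theta}(x|u)$, I would expand $\mathcal{D}_{\text{KL}}(q\,\|\,p_{\theta}(z|x,u)) = \mathbb{E}_{q}\log q(z) - \mathbb{E}_{q}\log p_{f}(x|z) - \mathbb{E}_{q}\log p_{T,\lambda}(z|u) + \log p_{\theta}(x|u)$, and rearrange the terms to obtain
\[
\begin{aligned}
&\mathbb{E}_{q}\log p_{f}(x|z) - \mathcal{D}_{\text{KL}}(q\,\|\,p_{T,\lambda}(z|u))\\
&\quad = \log p_{\theta}(x|u) - \mathcal{D}_{\text{KL}}(q\,\|\,p_{\theta}(z|x,u)).
\end{aligned}
\]
This is precisely the decomposition already recorded in the text for the two endpoints: taking $q = q_{\phi}(z|x,u)$ recovers the identity stated after \eqref{eq:iVAE_objective}, and taking $q = q_{\phi}(z|x)$ recovers the one stated after \eqref{eq:VAE_with_label_prior_objective}. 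The proposition is this same identity specialized to the intermediate choice $q = q_{\alpha}$.

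Finishing is then immediate. By definition the left-hand side equals $\text{ELBO}_{\theta,\phi}(\alpha;x,u)$, and since KL divergence is nonnegative we have $\mathcal{D}_{\text{KL}}(q_{\alpha}\,\|\,p_{\theta}(z|x,u)) \ge 0$, so the right-hand side is at most $\log p_{\theta}(x|u)$. Hence $\text{ELBO}_{\theta,\phi}(\alpha;x,u) \le \log p_{\theta}(x|u)$, as claimed. Equivalently, one may bypass the identity and apply Jensen's inequality directly to $\log p_{\theta}(x|u) = \log \mathbb{E}_{q_{\alpha}}\!\big[p_{f}(x|z)\,p_{T,\lambda}(z|u)/q_{\alpha}(z)\big]$, which yields the same bound.

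I expect no substantive obstacle here; the statement is essentially the standard variational bound applied to a cleverly chosen $q$. The only point requiring genuine care is the measure-theoretic bookkeeping: confirming that $q_{\alpha}$ is a bona fide density and is absolutely continuous with respect to $p_{T,\lambda}(z|u)$ and $p_{\theta}(z|x,u)$, so that each expectation is finite and the rearrangement leading to the identity is legitimate. With the Gaussian encoders, posteriors, and label priors of the iVAE model this holds automatically, so the remaining work is routine.
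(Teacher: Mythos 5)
Your proof is correct, but its primary route differs from the paper's, which is the one you mention only in passing at the end. The paper's proof writes $\log p_{\theta}(x|u) = \log \int \frac{p_{\theta}(x,z|u)}{q_{\alpha}(z)}\, q_{\alpha}(z)\, dz$ with $q_{\alpha}(z) := \alpha q_{\phi}(z|x) + (1-\alpha) q_{\phi}(z|x,u)$, applies Jensen's inequality, and then splits the log ratio using $p_{\theta}(x,z|u) = p_{f}(x|z)\, p_{T,\lambda}(z|u)$; the mixture enters only as the variational density, exactly as in your closing remark. Your main argument instead proves the exact identity $\mathbb{E}_{q_{\alpha}}\log p_{f}(x|z) - \mathcal{D}_{\text{KL}}(q_{\alpha}\,\|\,p_{T,\lambda}(z|u)) = \log p_{\theta}(x|u) - \mathcal{D}_{\text{KL}}(q_{\alpha}\,\|\,p_{\theta}(z|x,u))$ and concludes by nonnegativity of KL. This buys something the paper's proof of Proposition 3 does not record: an explicit expression for the gap as the KL divergence from $q_{\alpha}$ to the true posterior, which is exactly the fact exploited later in the paper (the proof of Proposition 4 uses that the gap at $\alpha=0$ is $\mathcal{D}_{\text{KL}}(q_{\phi}(z|x,u)\,\|\,p_{\theta}(z|x,u))$, vanishing at the optimal $\phi$), and which the main text states only for the two endpoints $\alpha\in\{0,1\}$. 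The price is the measure-theoretic care you correctly flag: the rearrangement into four finite terms needs each expectation to be finite (otherwise one risks $\infty-\infty$), whereas the paper's Jensen argument yields the inequality unconditionally, even when some KL term is $+\infty$. Under the Gaussian parametrization your absolute-continuity remark disposes of this, so both arguments are complete as written.
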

Proposition \hyperref[prop:ELBO]{3} can be derived by performing variational inference with $\alpha(x,u)q_{\phi}(z|x) + (1-\alpha(x,u))q_{\phi}(z|x,u)$. A set of ELBOs, $\{ \text{ELBO}_{\theta, \phi}(\alpha;x, u)| \alpha\in [0, 1]\}$ is a continuum of ELBOs whose endpoints are ELBOs with $q_{\phi}(z|x,u)$ and with $q_{\phi}(z|x)$.

We next present our CI-iVAE method. For a given identifiable generative model, CI-iVAE uses covariates to find the samplewise optimal elements $\alpha^{*}(x, u):=\underset{\alpha  \in [0, 1]}{\arg\max}\text{ELBO}_{\theta, \phi}(\alpha;x, u)$ in \eqref{eq:mixture_post} and utilizes it to maximize the tightest ELBOs. We refer to $\alpha^{*}(x,u)q_{\phi}(z|x)+(1-\alpha^{*}(x,u))q_{\phi}(z|x,u)$ as the samplewise optimal posterior distributions. The objective function of the CI-iVAE method is given by
\begin{equation}\label{eq:CI-iVAE_objective}
    \text{ELBO}_{\theta, \phi}(\alpha^{*}(x,u); x, u),
\end{equation}
where the KL term in \eqref{eq:iVAE_objective} is changed to $\mathcal{D}_{\text{KL}}(\alpha^{*}(x,u)q_{\phi}(z|x)+(1-\alpha^{*}(x,u))q_{\phi}(z|x,u)||p_{T,\lambda}(z|u))$ whose bad local solutions are $\alpha^{*}(x,u)q_{\phi}(z|x)+(1-\alpha^{*}(x,u))q_{\phi}(z|x,u)=p_{T,\lambda}(z|u)$. We derive that the posterior collapse problem does not occur at this local solution.
\begin{theorem}\label{thm:prevent_posterior_collapse}
For any $\theta$ forming non-trivial label prior and $\phi$, if $\alpha^{*}(x,u)>0$ and $\alpha^{*}(x,u) q_{\phi}(z|x) + (1-\alpha^{*}(x,u))q_{\phi}(z|x,u)=p_{T,\lambda}(z|u)$, then $q_{\phi}(z|x,u) \neq p_{T,\lambda}(z|u)$ holds with positive probability.
\end{theorem}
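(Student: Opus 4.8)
The plan is to argue by contradiction and reduce the statement to Proposition~\ref{prop:prevent_posterior_collapse}, which already forbids the encoder $q_{\phi}(z|x)$ from collapsing onto a non-trivial label prior. Concretely, I would suppose the negation of the conclusion, namely that $q_{\phi}(z|x,u)=p_{T,\lambda}(z|u)$ holds for $p(x,u)$-almost every $(x,u)$, and show that this supposition, together with the hypothesized collapse of the \emph{mixture}, forces the encoder itself to collapse, contradicting Proposition~\ref{prop:prevent_posterior_collapse}.

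The key step is purely algebraic. On the (assumed full-measure) set where the hypotheses hold, i.e. where $\alpha^{*}(x,u)>0$ and the mixture satisfies $\alpha^{*}(x,u)q_{\phi}(z|x)+(1-\alpha^{*}(x,u))q_{\phi}(z|x,u)=p_{T,\lambda}(z|u)$, I substitute the assumed collapse $q_{\phi}(z|x,u)=p_{T,\lambda}(z|u)$ into the mixture identity. This yields $\alpha^{*}(x,u)q_{\phi}(z|x)+(1-\alpha^{*}(x,u))p_{T,\lambda}(z|u)=p_{T,\lambda}(z|u)$, so that $\alpha^{*}(x,u)\big(q_{\phi}(z|x)-p_{T,\lambda}(z|u)\big)=0$. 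Because $\alpha^{*}(x,u)>0$, I may cancel the scalar factor and conclude $q_{\phi}(z|x)=p_{T,\lambda}(z|u)$ for almost every $(x,u)$ and Lebesgue-almost every $z$.

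The final step closes the argument: the identity $q_{\phi}(z|x)=p_{T,\lambda}(z|u)$ holding with probability one directly contradicts Proposition~\ref{prop:prevent_posterior_collapse}, which guarantees $q_{\phi}(z|x)\neq p_{T,\lambda}(z|u)$ on a set of positive $p(x,u)$-measure whenever the label prior is non-trivial. Hence the supposition fails, and $q_{\phi}(z|x,u)\neq p_{T,\lambda}(z|u)$ must hold with positive probability, as claimed. Note that the edge case $\alpha^{*}(x,u)=1$ needs no separate treatment: the cancellation above is valid for any $\alpha^{*}(x,u)\in(0,1]$.

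I expect the only delicate point to be the measure-theoretic bookkeeping rather than any analytic difficulty. Specifically, I must ensure that the intersection of the hypothesis set (where $\alpha^{*}>0$ and the mixture equals the prior) with the contradiction set (where $q_{\phi}(z|x,u)=p_{T,\lambda}(z|u)$) still carries full $p(x,u)$-measure, so that the derived encoder identity genuinely holds almost surely and therefore collides with the positive-measure non-collapse of Proposition~\ref{prop:prevent_posterior_collapse}. The strict positivity $\alpha^{*}(x,u)>0$ is exactly what licenses the cancellation; without it the mixture could coincide with the prior even when $q_{\phi}(z|x)$ differs from it, and the reduction would break down.
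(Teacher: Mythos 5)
Your proposal is correct and follows essentially the same route as the paper's proof: assume collapse $q_{\phi}(z|x,u)=p_{T,\lambda}(z|u)$ almost surely, substitute into the mixture identity and cancel the strictly positive factor $\alpha^{*}(x,u)$ to force $q_{\phi}(z|x)=p_{T,\lambda}(z|u)$, then contradict Proposition~1 via the non-triviality of the label prior. The only difference is that you spell out the measure-theoretic bookkeeping that the paper leaves implicit, which is a harmless refinement rather than a new idea.
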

That is, even in the worst case where the KL term dominates \eqref{eq:CI-iVAE_objective}, the $q_{\phi}(z|x,u)$ does not collapse to $p_{T,\lambda}(z|u)$.

Furthermore, we derive that our ELBO can approximate the log-likelihood.
\begin{proposition}\label{prop4} We assume that, for any $\theta$, there is $\phi$ satisfying $q_{\phi}(z|x,u)=p_{\theta}(z|x,u)$ with probability  $1$ w.r.t. $p(x, u)$.\footnote{Again, \cite{khemakhem2020variational} assumed this condition to justify iVAEs using $q_{\phi}(z|x,u)$.} Then, $\underset{\phi}{\max}\mathbb{E}_{p(x,u)}\text{ELBO}_{\theta, \phi}(\alpha^{*}(x,u);x,u)=\mathbb{E}_{p(x,u)}\log p_{\theta}(x|u)$.
\end{proposition}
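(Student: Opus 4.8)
The plan is to prove the claimed identity by two matching inequalities, relying on the variational decomposition of each ELBO together with Proposition~\ref{prop:ELBO}. The central tool is the standard identity that, for any mixing weight $\alpha$, writing $q := \alpha q_{\phi}(z|x) + (1-\alpha)q_{\phi}(z|x,u)$,
\[
\text{ELBO}_{\theta,\phi}(\alpha;x,u) = \log p_{\theta}(x|u) - \mathcal{D}_{\text{KL}}(q\,||\,p_{\theta}(z|x,u)),
\]
which I would establish exactly as in the remark following \eqref{eq:iVAE_objective}, by inserting the factorizations $p_{\theta}(x,z|u) = p_f(x|z)p_{T,\lambda}(z|u) = p_{\theta}(z|x,u)p_{\theta}(x|u)$ into the two terms of the ELBO. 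This exhibits the gap between every member of the family and $\log p_{\theta}(x|u)$ as a nonnegative KL divergence to the true posterior $p_{\theta}(z|x,u)$, and it is the quantity that the sandwich argument below will pin to zero at the chosen optimum.

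For the upper bound ($\le$), Proposition~\ref{prop:ELBO} gives, pointwise for every $(x,u)$ and every $\phi$, $\text{ELBO}_{\theta,\phi}(\alpha^{*}(x,u);x,u) \le \log p_{\theta}(x|u)$. Taking the expectation over $p(x,u)$ and then the maximum over $\phi$ yields $\max_{\phi}\mathbb{E}_{p(x,u)}\text{ELBO}_{\theta,\phi}(\alpha^{*}(x,u);x,u) \le \mathbb{E}_{p(x,u)}\log p_{\theta}(x|u)$.

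For achievability ($\ge$), I would invoke the hypothesis to select a $\phi$ with $q_{\phi}(z|x,u) = p_{\theta}(z|x,u)$ with probability $1$ under $p(x,u)$, and then evaluate the family at the endpoint $\alpha = 0$, where the mixture reduces to $q_{\phi}(z|x,u)$. For this $\phi$ the KL term in the decomposition above vanishes, so $\text{ELBO}_{\theta,\phi}(0;x,u) = \log p_{\theta}(x|u)$ almost surely. Because $\alpha^{*}(x,u)$ maximizes $\text{ELBO}_{\theta,\phi}(\cdot;x,u)$ over $[0,1]$, we have $\text{ELBO}_{\theta,\phi}(\alpha^{*}(x,u);x,u) \ge \text{ELBO}_{\theta,\phi}(0;x,u) = \log p_{\theta}(x|u)$, while Proposition~\ref{prop:ELBO} supplies the reverse inequality; hence the two coincide almost surely. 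Integrating and maximizing over $\phi$ then gives the matching lower bound, and combining the two directions closes the proof.

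The argument is short because the substantive work has been front-loaded into Proposition~\ref{prop:ELBO} and into the variational decomposition; accordingly I do not anticipate a genuine obstacle. The one point to be careful about is bookkeeping of the ``with probability $1$'' clause, namely that the pointwise equality $\text{ELBO}_{\theta,\phi}(0;x,u)=\log p_{\theta}(x|u)$ holds on a set of full $p(x,u)$-measure so that it survives the expectation. The conceptual crux worth emphasizing is that the optimal mixture weight $\alpha^{*}$ is trapped between the endpoint value at $\alpha=0$, which the recoverability assumption already forces up to the log-likelihood, and the universal ceiling on every ELBO provided by Proposition~\ref{prop:ELBO}.
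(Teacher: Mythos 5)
Your proposal is correct and follows essentially the same route as the paper's proof: both sandwich the objective between the upper bound from Proposition 3 and the lower bound obtained by noting $\text{ELBO}_{\theta,\phi}(\alpha^{*}(x,u);x,u) \geq \text{ELBO}_{\theta,\phi}(0;x,u)$ and then using the hypothesis to make the KL gap at $\alpha=0$ vanish. The only cosmetic difference is that you fix the recovering $\phi$ first and work pointwise almost surely, whereas the paper maximizes over $\phi$ and observes that the minimal KL term is zero; the substance is identical.
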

Proposition \hyperref[prop4]{4} implies that maximizing our ELBO is equivalent to learning MLEs with tighter lower bounds than \eqref{eq:iVAE_objective}. We also derive that the difference between our optimal ELBO and the ELBO of existing iVAE is significant under some conditions in Theorem 3 in Appendix A.

Though the CI-iVAE allows learning better representations, numerical approximation for $\alpha^{*}(x,u)$ in \eqref{eq:CI-iVAE_objective} may lead to burdensome computational costs. To overcome this technical obstacle, we provide an alternative expression of our ELBO that allows us to approximate $\alpha^{*}(x,u)$ within twice the computation time of the ELBO of the existing iVAE.
\begin{proposition}\label{prop:ELBO_representation} For any sample $(x, u)$, $\theta=(f, T, \lambda)$, $\phi$, and $\alpha \in [0, 1]$, $\text{ELBO}_{\theta, \phi}(\alpha;x,u)$ can be expressed as
\begin{equation*}\resizebox{1.0\hsize}{!}{
    $\begin{aligned}
    &\alpha \text{ELBO}_{\theta, \phi}(1;x,u)+(1-\alpha)\text{ELBO}_{\theta, \phi}(0;x,u)\\
    &+\alpha \mathcal{D}^{1-\alpha}_{\text{Skew}}(q_{\phi}(z|x)||q_{\phi}(z|x,u))+(1-\alpha) \mathcal{D}^{\alpha}_{\text{Skew}}(q_{\phi}(z|x,u)||q_{\phi}(z|x)),
\end{aligned}$}
\end{equation*}
where $\mathcal{D}^{\alpha}_{\text{Skew}}$ is the skew divergence defined as $\mathcal{D}^{\alpha}_{\text{Skew}}(p||q):=\mathcal{D}_{\text{KL}}(p||(1-\alpha)p+\alpha q)$ \citep{lin1991divergence}.
\end{proposition}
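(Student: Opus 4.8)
The plan is to compute $\text{ELBO}_{\theta,\phi}(\alpha;x,u)$ directly from its definition in Proposition 3 and collect terms by their $\alpha$-weights. Write $m_\alpha := \alpha q_\phi(z|x) + (1-\alpha) q_\phi(z|x,u)$ for the mixture density. Since the expectation is linear in the sampling density, the reconstruction term splits at once as $\mathbb{E}_{m_\alpha}\log p_f(x|z) = \alpha\,\mathbb{E}_{q_\phi(z|x)}\log p_f(x|z) + (1-\alpha)\,\mathbb{E}_{q_\phi(z|x,u)}\log p_f(x|z)$, so the entire content of the proposition lives in rewriting the KL term $\mathcal{D}_{\text{KL}}(m_\alpha \,||\, p_{T,\lambda}(z|u))$.

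First I would split the KL integral along the two mixture components and, inside each piece, insert that component into the logarithm via $\log\frac{m_\alpha}{p_{T,\lambda}} = \log\frac{m_\alpha}{q_\phi(\cdot)} + \log\frac{q_\phi(\cdot)}{p_{T,\lambda}}$. For the $q_\phi(z|x)$ piece this produces $-\mathcal{D}_{\text{KL}}(q_\phi(z|x)||m_\alpha) + \mathcal{D}_{\text{KL}}(q_\phi(z|x)||p_{T,\lambda})$, and the $q_\phi(z|x,u)$ piece gives the analogous expression. The crucial recognition step is that the two KL-to-mixture terms are exactly skew divergences: reading $m_\alpha = (1-(1-\alpha))\,q_\phi(z|x) + (1-\alpha)\,q_\phi(z|x,u)$ matches the definition $\mathcal{D}^{\beta}_{\text{Skew}}(p||q)=\mathcal{D}_{\text{KL}}(p||(1-\beta)p+\beta q)$ with $\beta = 1-\alpha$, so $\mathcal{D}_{\text{KL}}(q_\phi(z|x)||m_\alpha) = \mathcal{D}^{1-\alpha}_{\text{Skew}}(q_\phi(z|x)||q_\phi(z|x,u))$; symmetrically, reading $m_\alpha$ with $q_\phi(z|x,u)$ as the base component gives $\mathcal{D}_{\text{KL}}(q_\phi(z|x,u)||m_\alpha) = \mathcal{D}^{\alpha}_{\text{Skew}}(q_\phi(z|x,u)||q_\phi(z|x))$.

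Substituting these back, the KL term becomes $\alpha\,\mathcal{D}_{\text{KL}}(q_\phi(z|x)||p_{T,\lambda}) + (1-\alpha)\,\mathcal{D}_{\text{KL}}(q_\phi(z|x,u)||p_{T,\lambda})$ minus $\alpha\,\mathcal{D}^{1-\alpha}_{\text{Skew}}(q_\phi(z|x)||q_\phi(z|x,u)) + (1-\alpha)\,\mathcal{D}^{\alpha}_{\text{Skew}}(q_\phi(z|x,u)||q_\phi(z|x))$. Combining this with the linear reconstruction split and regrouping, the pure-$\alpha$ and pure-$(1-\alpha)$ pieces assemble into $\alpha\,\text{ELBO}_{\theta,\phi}(1;x,u) + (1-\alpha)\,\text{ELBO}_{\theta,\phi}(0;x,u)$ --- using that $\text{ELBO}_{\theta,\phi}(1;x,u)=\mathbb{E}_{q_\phi(z|x)}\log p_f(x|z) - \mathcal{D}_{\text{KL}}(q_\phi(z|x)||p_{T,\lambda})$ and the corresponding expression at $\alpha=0$ --- while the two skew-divergence terms survive with exactly the signs and weights stated. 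The whole argument is elementary algebraic bookkeeping; the only step requiring genuine care, and where an error is most likely, is correctly matching each skew-divergence parameter ($1-\alpha$ for the $q_\phi(z|x)$ base, $\alpha$ for the $q_\phi(z|x,u)$ base) to its reference distribution, since swapping them yields a superficially plausible but incorrect identity.
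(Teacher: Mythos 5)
Your proposal is correct and follows essentially the same route as the paper's proof: both split the mixture reconstruction expectation linearly and the mixture-KL integral along its two components, telescope the log ratio $\log(m_\alpha/p_{T,\lambda})$ against each component density, and identify the resulting KL-to-mixture terms as $\mathcal{D}^{1-\alpha}_{\text{Skew}}(q_{\phi}(z|x)||q_{\phi}(z|x,u))$ and $\mathcal{D}^{\alpha}_{\text{Skew}}(q_{\phi}(z|x,u)||q_{\phi}(z|x))$. Your skew-parameter matching ($1-\alpha$ for the $q_{\phi}(z|x)$ base, $\alpha$ for the $q_{\phi}(z|x,u)$ base) agrees with the paper, and the final regrouping into $\alpha\,\text{ELBO}_{\theta,\phi}(1;x,u)+(1-\alpha)\,\text{ELBO}_{\theta,\phi}(0;x,u)$ plus the two skew terms is exactly the paper's identity.
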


Details on using samplewise optimal posterior distributions to maximize \eqref{eq:CI-iVAE_objective} is described in Algorithm \hyperref[alg:training]{1}. The main bottleneck is on computing $\text{ELBO}_{\theta, \phi}(0;x,u)$, $\text{ELBO}_{\theta, \phi}(1;x,u)$, and conditional means and standard deviations of the label prior and encoder distributions, which requires roughly twice the computation time of the ELBO of iVAE. With Proposition \hyperref[prop:ELBO_representation]{5} and reparametrization trick, for any $\alpha$, the remaining process to compute $\text{ELBO}_{\theta, \phi}(\alpha;x,u)$ is completed in a short time.

\section{Experiments}\label{sec4}
\begin{figure*}[h]
%\vspace{.3in}
\centerline{\includegraphics[width=1.8\columnwidth]{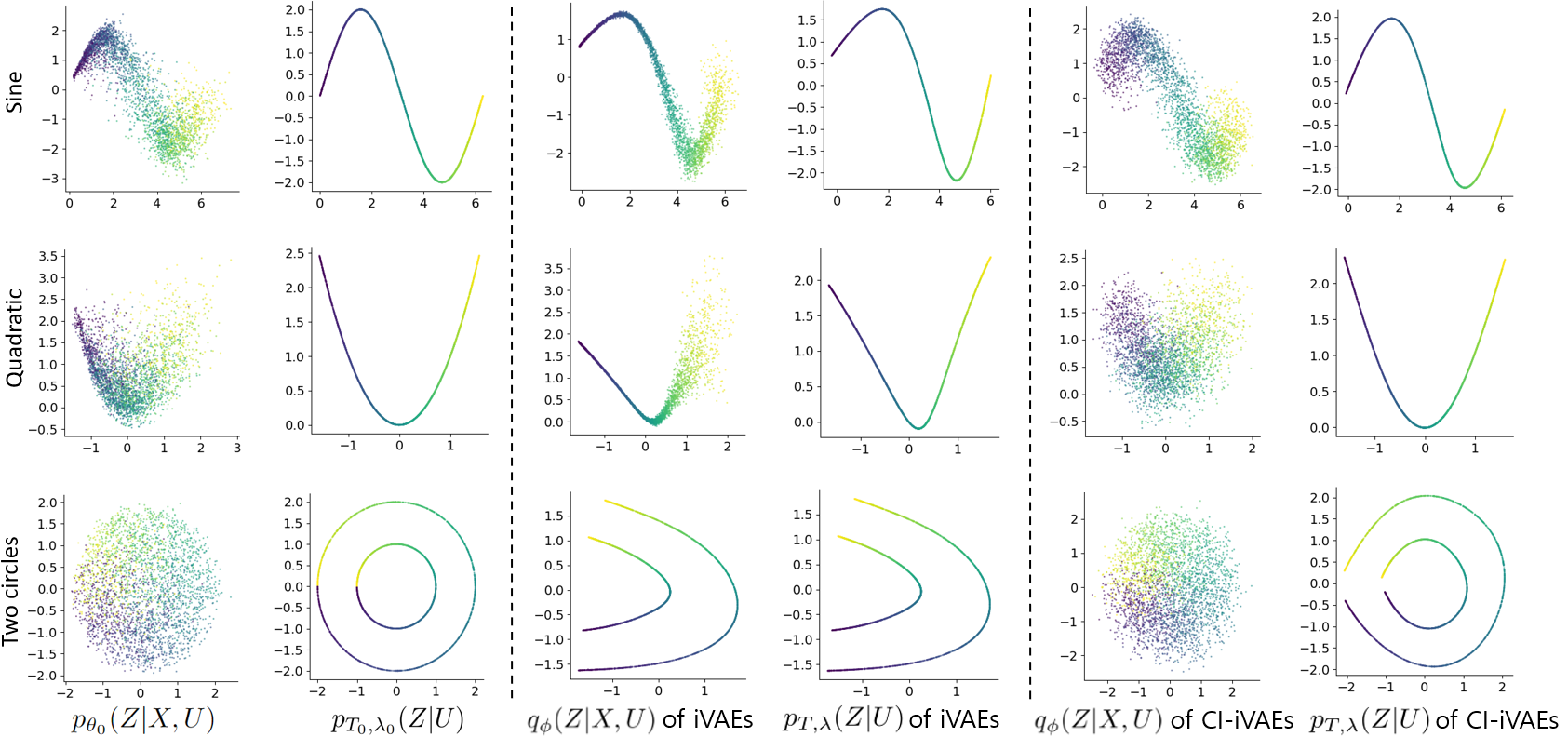}}
\vspace{.1in}
\caption{Visualization of latent variables from simulation datasets. In all datasets, the posterior $q_{\phi}(z|x,u)$ of iVAEs are close to the label prior $p_{T, \lambda}(z|u)$, and tend to underestimate the variability by observations. In contrast, by addressing samplewise optimal posteriors, CI-iVAEs learn posterior and prior distributions closer to GT.}
\label{fig:simulation_results}
\end{figure*}

\begin{figure}[h]
\centerline{\includegraphics[width=1.0\columnwidth]{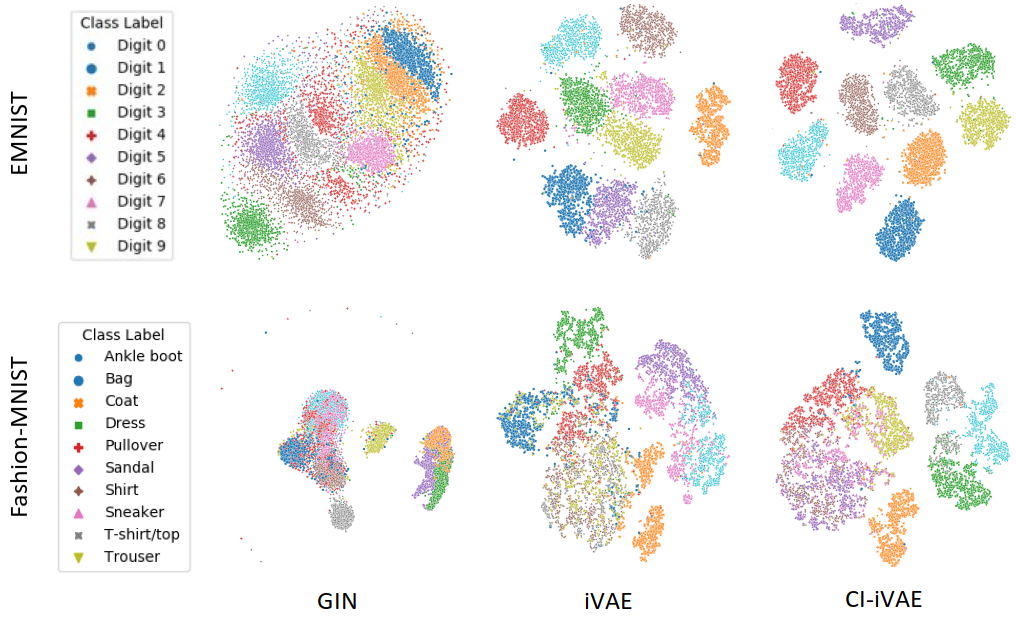}}
\vspace{.15in}
\caption{Visualization of the t-SNE embeddings of encoders $q_{\phi}(z|x)$ from various methods on EMNIST and Fashion-MNIST datasets.}
\label{fig:image_datasets_t-SNE}
\end{figure}

\begin{figure}[h]
\centerline{\includegraphics[width=1.0\columnwidth]{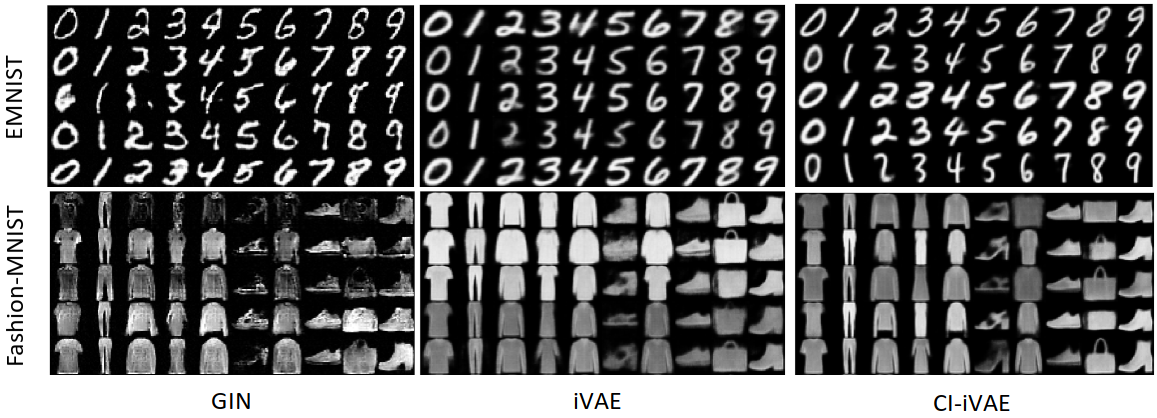}}
\vspace{.1in}
\caption{Generation results from various methods on EMNIST and Fashion-MNIST datasets. We generate five synthetic images in each class. Both iVAEs and CI-iVAEs produce clearer images than GIN.}
\label{fig:image_datasets_generation_result}
\end{figure}

\begin{figure}[h]
\centerline{\includegraphics[width=0.7\columnwidth]{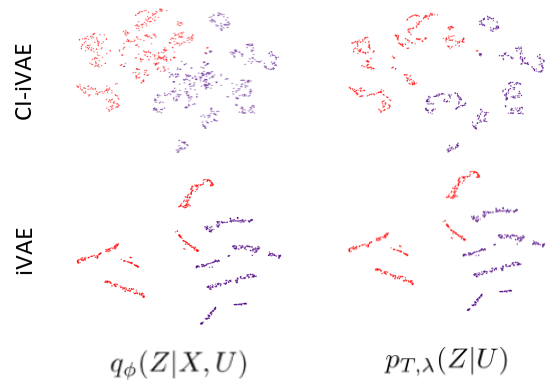}}
\vspace{.1in}
\caption{Visualization of the t-SNE embeddings of latent variables from iVAEs and CI-iVAEs on the ABCD dataset. Red and blue points indicate female and male, respectively. The posterior of iVAEs collapses to the label prior while that of CI-iVAEs learn variations by observations.}
\label{fig:ABCD_t_sne_representations}
\end{figure}
We have validated and applied our method on synthetic, EMNIST, Fashion-MNIST, and ABCD datasets. As in \cite{zhou2020learning}, we model the posterior distribution by $q_{\phi}(z|x,u) \propto q_{\phi}(z|x)p_{T,\lambda}(z|u)$. The label prior $p_{T,\lambda}(z|u)$ and encoder $q_{\phi}(z|x)$ is modeled as Gaussian distributions.
Implementation details, including data descriptions and network architectures, are provided in Appendix \hyperref[app.B.1]{B.1}.\footnote{The implementation code is provided in the supplementary file.}

\subsection{Simulation Study}\label{sec4.1}
We first examine the effectiveness of samplewise optimal posteriors by comparing iVAEs and CI-iVAEs on synthetic datasets.
We use three data generation schemes and named them by shapes of distributions of latent variables given covariates:  (i) sine, (ii) quadratic, and (iii) two circles.

\begin{table*}[h]\label{tab:simulation_performance}
\caption{Means of evaluation metrics with standard errors from iVAEs and CI-iVAEs on various latent structures. For all metrics, higher values are better. CI-iVAEs outperform iVAEs in all datasets and all metrics. The number of repeats is $20$.} \label{sample-table}
\begin{center}
\resizebox{2.0\columnwidth}{!}{\begin{tabular}{ccccccc} \toprule
Latent Structure & Method & \multicolumn{5}{c}{Evaluation Metric} \\ \cline{3-7} & & COD ($q_{\phi}(z|x,u)$) & COD ($q_{\phi}(z|x)$)  & MCC ($q_{\phi}(z|x,u)$) & MCC ($q_{\phi}(z|x)$) & Log-likelihood                                               \\ \hline \hline
\multirow{2}{*}{Sine} & iVAE  & .9285 (.0009) & .9692 (.0004) & .7364 (.0286) & .7531 (.0266) & -131.8387 (1.7467)\\
& CI-iVAE & \textbf{.9823} (.0002) & \textbf{.9782} (.0003) & \textbf{.8898} (.0114) & \textbf{.8716} (.0117) & \textbf{-111.2823} (0.5576) \\ \hline
\multirow{2}{*}{Quadratic} & iVAE & .5238 (.0059) & .8256 (.0112) & .5467 (.0086) & .6966 (.0097) & -105.4009 (0.1260)\\
& CI-iVAE & \textbf{.9177} (.0006) & \textbf{.8755} (.0013) & \textbf{.8463} (.0192) & \textbf{.8424} (.0192) & \textbf{-101.0430} (0.2420)\\ \hline
\multirow{2}{*}{Two Circles} & iVAE & .2835 (.0119) & .7753 (.0186) & .6233 (.0108) & .8171 (.0106) & -114.9876 (1.2964) \\
& CI-iVAE &  \textbf{.9156} (.0007) & \textbf{.9440} (.0008) & \textbf{.8278} (.0194)& \textbf{.8347} (.0196) & \textbf{-102.9267} (0.3198) \\ \bottomrule
\end{tabular}}
\end{center}
\end{table*}
Table \hyperref[tab:simulation_performance]{1} and Figure \hyperref[fig:simulation_results]{1} show the results from the synthetic datasets. We consider coefficient of determination (COD,~\citealt{schneider2022learnable}), mean correlation coefficient (MCC,~\citealt{khemakhem2020variational}), and log-likelihood as evaluation metrics. Higher values of CODs and MCCs indicate closer to the GT. The log-likelihood is used to evaluate the whole data generation scheme. All methods use the same network architectures to use the same family of density functions. The log-likelihood is calculated by $\log p_{\theta}(x|u)=\log \int p_{f}(x|z)p_{T,\lambda}(z|u)dz$ with Monte Carlo approximation and is averaged over samples.

According to Table \hyperref[tab:simulation_performance]{1}, iVAEs are worse than CI-iVAEs in all datasets and all evaluation metrics. When the iVAE is compared with the CI-iVAE, most of p-values for all datasets and all evaluation metrics are less $.001$. The only exception is the MCCs on $q_{\phi}(z|x)$ on the two circles dataset whose p-value is $.22$. That is, our sharper ELBO with samplewise optimal posteriors enhances performances on learning GT ICs and MLEs for mixing functions. We also observe that encoder distributions of CI-iVAEs yield performances comparable to posterior distributions.

Visualization of latent variables and results is presented in Figure \hyperref[fig:simulation_results]{1}. In the left panel, the GT posterior and prior distributions are presented. In the middle and right panels, results from iVAEs and CI-iVAEs are presented, respectively. Points indicate conditional expectations of each distributions and are colored by covariates. For the two circles structure, we color datapoints by angles, $U_{1}$.
In all latent structures, iVAEs learn $q_{\phi}(z|x,u)$ closer to $p_{T,\lambda}(z|u)$. The GT posterior distribution of ICs has variations by observations, and the $q_{\phi}(z|x,u)$ of iVAEs tends to underestimate these variations. In contrast, CI-iVAEs alleviate this phenomenon, consequently better recovering the GT posterior and prior distributions than iVAEs. Especially in the two circles dataset, we sample the angle $U_{1}$ from $[-\pi, \pi]$ to check whether each model can recover the connection at $U_{1}=-\pi$ and $U_{1}=\pi$. The points at $U_{1}=-\pi$ and $\pi$ by CI-iVAEs locate closer to each other than those by iVAEs.

\subsection{Applications to Real Datasets}
\subsubsection{EMNIST and Fashion-MNIST}
We compare the proposed CI-iVAE with GIN and iVAE on two benchmark image datasets: EMNIST and Fashion-MNIST. GIN is a state-of-the-art identifiable deep image generative model using convolutional coupling blocks \citep{sorrenson2020disentanglement}. GIN does not incorporate covariates, so we compare representations from encoders $q_{\phi}(z|x)$ from various methods. In EMNIST and Fashion-MNIST, we use labels of digits and fashion-items as covariates.
\begin{table}[t]
\caption{Means of evaluation metrics with standard errors from various methods on EMNIST and Fashion-MNIST datasets. The number of repeats is $20$.} \label{tab:image_performances}
\begin{center}
\begin{tabular}{ccc} \toprule
Dataset & Method & SSW/SST ($\downarrow$)                                                 \\ \hline \hline
\multirow{3}{*}{EMNIST} & GIN & .6130 (.0075) \\
& iVAE  & .5486 (.0037) \\
& CI-iVAE & \textbf{.4117} (.0032) \\ \hline
\multirow{3}{*}{Fashion-MNIST} & GIN & .8503 (.0026) \\
& iVAE & .6157 (.0046) \\
& CI-iVAE & \textbf{.4926} (.0024) \\ \bottomrule
\end{tabular}
\end{center}
\end{table}

\begin{table*}[h]
\caption{Means of prediction performances with standard errors from $5$-fold cross-validation. We train support vector regression for age and CBCL scores, and support vector machine for sex and puberty. The representation from CI-iVAEs outperform that from AEs and iVAEs baselines. Prediction performances with our representations are comparable to or better than those with raw observations.} \label{tab:abcd_performances}
\begin{center}
    \begin{tabular}{lrrrr}
\toprule Input & \multicolumn{4}{c}{Covariates} \\ \cline{2-5}
           & Age (MSE $\downarrow$)           & Sex (Error rate $\downarrow$)    & Puberty (F1 score $\uparrow$)  & CBCL scores (MSE $\downarrow$) \\ \hline \hline
    x            & .165 (.012)          & .105 (.007)          & .531 (.023)          & \textbf{.072} (.006) \\
    Representations from AE      & .336 (.005)          & .368 (.006)          & .188 (.009)          & .135 (.006) \\
    Representations from iVAE    & .361 (.005)          & .479 (.010)          & .046 (.005)          & .148 (.006) \\
    Representations from CI-iVAE & \textbf{.153} (.017) & \textbf{.086} (.009) & \textbf{.563} (.053) & .073 (.008)\\ \bottomrule
    \end{tabular}
\end{center}
\end{table*}

The experimental results are presented in Table \hyperref[tab:image_performances]{2} and Figure \hyperref[fig:image_datasets_t-SNE]{2}. Further generation results are provided in Figures 1 and 2 in Appendix \hyperref[app.B.2]{B.2}. We consider the ratio of the within-cluster sum of squares (SSW) over the total sum of squares (SST) as evaluation metrics. The SSW/SST measures how well representations are clustered by covariates.

Results presented in Table \hyperref[tab:image_performances]{2} show that our method is better than both iVAEs and GIN in all datasets. When the iVAE is compared with the CI-iVAE, p-values are $2.3 \times 10^{-27}$ for EMNIST and $8.8 \times 10^{-25}$ for Fashion-MNIST datasets. When GIN is compared with the CI-iVAE, p-values are $2.1 \times 10^{-25}$ for EMNIST and $3.1 \times 10^{-48}$ for Fashion-MNIST datasets. That is, the CI-iVAE is better than iVAE and GIN for extracting covariates-related information from observations.

A visualization of latent variables using t-SNE \citep{van2008visualizing} embeddings is presented in Figure \hyperref[fig:image_datasets_t-SNE]{2}. In all datasets, CI-iVAEs yield more separable representations of the covariates than GIN and iVAEs.

We present generation results in Figure \hyperref[fig:image_datasets_generation_result]{3}. iVAEs and CI-iVAEs tend to generate more plausible images than GIN. In the third row in EMNIST, GIN fails to produce some digits. When we compare iVAEs and CI-iVAEs, iVAEs tend to generate more blurry results. In the 6th column in Fashion-MNIST, iVAEs fail to generate the sandal image in the second row and generate blurry results in other rows.

\subsubsection{Application to Brain Imaging Data}
Here we present the application of the CI-iVAE on the ABCD study dataset, which is the largest single-cohort prospective longitudinal study of neurodevelopment and children's mental health in the United States.
The ABCD dataset provides resources to address a central scientific question in Psychiatry: can we find the brain imaging representations that are associated with phenotypes. For this purpose, the proposed CI-iVAE is appealing and more efficient in that it incorporates the information in demographics, symptoms, which, by domain knowledge, are associated with brain imaging. In this application, observations are the MRI mean thickness and functional connectivity data from Gorden Atlas \citep{gordon2016generation}, and the covariates are interview age, gender, puberty level, and total Child Behavior Checklist (CBCL) scores. To find the brain imaging representations that can be interpreted with covariates, we train iVAEs and evaluate representations from encoders $q_{\phi}(z|x)$ using only test brain imaging. With $q_{\phi}(z|x)$, we can extract information contained only in brain imaging. We consider vanilla AEs and iVAEs as baselines.

The experimental results are presented in Figure \hyperref[fig:ABCD_t_sne_representations]{4} and Table \hyperref[tab:abcd_performances]{3}. We quantify prediction performance using representations from various methods to evaluate how much covariate-related information in observations is extracted from brain measures. We use conditional expectations of encoders, $q_{\phi}(z|x)$, as representations. For the puberty level, we oversample minority classes to balance classes.

Visualization of t-SNE embeddings of latent variables is presented in Figure \hyperref[fig:ABCD_t_sne_representations]{4}.
The result from iVAEs demonstrates that $q_{\phi}(z|x,u)$ collapses to $p_{T, \lambda}(z|u)$ and iVAEs tend to learn less diverse representations than CI-iVAEs, which is consistent to previous results.

Table \hyperref[tab:abcd_performances]{3} shows that, for all covariates, representations from CI-iVAEs outperform those from AEs and iVAEs and are comparable to raw observations. The CI-iVAE is significantly better than the iVAE, p-values for age, sex, puberty level, and CBCL scores are $1.1\times10^{-6}$, $8.4\times10^{-10}$, $4.6\times10^{-6}$, and $3.1\times10^{-5}$, respectively. Due to the collapsed posterior, representations from encoders in iVAEs do not extract much information from brain imaging. In contrast, the encoder of CI-iVAEs extracts representations which are very informative of the covariates. Thus, our representations extracted from brain measures preserve more covariates-related information than the other methods.

\section{Discussion}\label{sec5}
We proposed a new representation learning approach, CI-iVAEs, to overcome the limitations of iVAEs. Our objective function uses samplewise optimal posterior distributions to prevent the posterior collapse problem.
Representations from our methods on various synthetic and real datasets were better than those from existing methods by extracting covariates-associated information in observations. Our work is the first to adapt identifiable generative models to human brain imaging. We used pre-processed ROI level summary measures as observations. An interesting future direction is to extract interpretable features from minimally-processed images. Another direction is to apply/develop interpretable machine learning tools creating feature importance scores \citep{ribeiro2019local, molnar2020interpretable, guidotti2018survey} to reveal scientific insights from our representations.

\section{Acknowledgement}
This work is supported by NIMH grant R01 MH124106.

\newpage
\appendix
\section{Details on Theoretical Results}\label{app.A}
\subsection{Further Theoretical Results}
Our method introduces both $q_{\phi}(z|x)$ and $q_{\phi}(z|x,u)$ in consisting posterior distributions to conduct variational inference. We derive that the proposed ELBOs are concave, are sharper lower bounds than the ELBO of the existing iVAE, and prevent the posterior collapse issue in the existing iVAE.\\ \\
We first show the concavity of the proposed ELBO and a necessary and sufficient condition for the concavity.
\label{prop:concavity}
\textbf{Proposition 6.} \textit{For any sample $(x, u)$ and $\alpha \in [0, 1]$, $\text{ELBO}_{\theta,\phi}(\alpha;x,u)$ is concave w.r.t. $\alpha$. It is strictly concave if and only if $q_{\phi}(z|x) \neq q_{\phi}(z|x,u)$ for some $z$.}

Note that the first two terms in Proposition 5 are linear w.r.t. $\alpha$, so the concavity comes from the last two terms. That is, the difference between $q_{\phi}(z|x)$ and $q_{\phi}(z|x,u)$ induces the concavity.

Next, we show that our method uses strictly sharper lower bounds than the existing iVAE. Let $\Delta_{1-0}(x,u):= \text{ELBO}_{\theta,\phi}(1;x,u)-\text{ELBO}_{\theta,\phi}(0;x,u)$ and
\begin{equation}\label{eq:snr}
    \text{SNR}(g):= \frac{(\mathbb{E}_{q_{\phi}(z|x)}g(z)-\mathbb{E}_{q_{\phi}(z|x, u)}g(z))^{2}}{\max(Var_{q_{\phi}(z|x)}g(z), Var_{q_{\phi}(z|x, u)}g(z))}.
\end{equation}
Here, $\text{SNR}(g)$ is the signal-to-noise ratio between $q_{\phi}(z|x)$ and $q_{\phi}(z|x,u)$ w.r.t. $g$ quantifying the discrepancy between the two distributions. The proof of Theorem \hyperref[thm:positive_margin]{3} is provided in Appendix \hyperref[AppendixA.3]{A.3}.\\
\textbf{Theorem 3.}\label{thm:positive_margin} \textit{For any sample $(x, u)$, $\theta=(f, T, \lambda)$, $\phi$, and $\epsilon>0$, if there is a function $g: \mathcal{Z} \to \mathbb{R}$ satisfying $\text{SNR}(g) \geq 1/\epsilon$ and $|\Delta_{1-0}(x,u)| \leq -\log\epsilon+O(\epsilon \log\epsilon)$, then
\begin{equation*}\resizebox{1.0\hsize}{!}{
$\begin{aligned}
    &\text{ELBO}_{\theta,\phi}(\alpha^{*}(x,u); x, u)-\text{ELBO}_{\theta,\phi}(0; x,u) \\
    &\geq \frac{-1+\sqrt{1+4\epsilon}}{2} |\Delta_{1-0}(x,u)| + o(|\Delta_{1-0}(x,u)|) + O(\epsilon \log \epsilon)\\
    &\quad \text{as} \quad \text{$|\Delta_{1-0}(x,u)| \to \infty$ and $\epsilon \to 0^{+}$}.
\end{aligned}$}
\end{equation*}
}
That is, if $q_{\phi}(z|x)$ and $q_{\phi}(z|x,u)$ are different so that $\text{SNR}(g)$ is large enough for some $g$, then $\alpha^{*}(x,u)>0$ and our bound is sharper than that of iVAE with positive margins. In the simulation study in Table \hyperref[tab:simulation_alpha_table]{2} in Appendix B.2, $\alpha^{*}(x,u)>0$ holds with positive probability and $\alpha^{*}(x,u)$ can be approximated by a formula based on our theory. Under the same conditions in Theorem \hyperref[thm:positive_margin]{3}, we derived that
\begin{equation}\label{eq:alpha_formula}\resizebox{1.0\hsize}{!}{
$\begin{aligned}
    \alpha_{\text{approx}}^{*}(\epsilon, \Delta_{1-0}(x,u)):= \frac{1-\sqrt{1+4\epsilon}}{2} + \frac{\sqrt{1+4\epsilon}}{1+e^{-\sqrt{1+4\epsilon}\Delta_{1-0}(x,u)}}
\end{aligned}$}
\end{equation}
is in $[0, 1]$ and $\text{ELBO}_{\theta, \phi}(\alpha_{\text{approx}}^{*}(\epsilon, \Delta_{1-0}(x,u)); x, u)-\text{ELBO}_{\theta,\phi}(0;x,u)$ is greater than or equal to the positive margin in Theorem \hyperref[thm:positive_margin]{3}. Details are provided in Lemma \hyperref[lemA.4]{5} in Appendix \hyperref[AppendixA.3]{A.3}. The calculated values by this formula were similar to numerically approximated $\alpha^{*}$, which supports the validity of our theory.

\subsection{Proofs of Propositions}
\subsubsection{Proof of Proposition $1$}
We provide a proof by contradiction. We assume that $q_{\phi}(z|x,u) \neq p_{T,\lambda}(z|u)$ holds w.p. $0$, which is equivalent to assume that the posterior collapse occurs, i.e., $q_{\phi}(z|x,u)=p_{T,\lambda}(z|u)$ holds w.p. $1$. Since $q_{\phi}(z|x,u)=q_{\phi}(z|x)$, it implies that $q_{\phi}(z|x)=p_{T,\lambda}(z|u)$ holds w.p. $1$. Now, $p_{T,\lambda}(z|u)=q_{\phi}(z|x)=\int q_{\phi}(z|x)p(u)du=\int p_{T,\lambda}(z|u)p(u)du$ contradicts to that $p_{T,\lambda}$ is non-trivial.

\subsubsection{Proof of Proposition $2$}
We provide a proof by contradiction.
Since $\mathbb{E}_{q_{\phi}(z|x)}\log p_{f}(x|z) - \mathcal{D}_{\text{KL}}(q_{\phi}(z|x)||p_{T, \lambda}(z|u))$ is equal to $\log p_{\theta}(x|u) - \mathcal{D}_{\text{KL}}(q_{\phi}(z|x)||p_{\theta}(z|x,u))$, $\underset{\phi}{\text{max}}\mathbb{E}_{p(x,u)}\text{ELBO}_{\theta, \phi}(1;x,u)$ is equal to $\mathbb{E}_{p(x,u)}\log p_{\theta}(x|u)$ if and only if $\underset{\phi}{\text{min}}\mathcal{D}_{\text{KL}}(q_{\phi}(z|x)||p_{\theta}(z|x,u))=0$. It implies $p_{\theta}(z|x,u) = p_{\theta}(z|x)$. By Bayes' theorem, $p_{\theta}(z|x,u)=p_{\theta}(x|z,u)p_{T,\lambda}(z|u)/p_{\theta}(x|u)=p_{f}(x|z)p_{T,\lambda}(z|u)/p_{\theta}(x|u)$ and $p_{\theta}(z|x)=p_{f}(x|z)p_{\theta}(z)/p_{\theta}(x)$. Thus, $p_{T, \lambda}(z|u)p_{\theta}(x)=p_{\theta}(z)p_{\theta}(x|u)$. Now, $p_{T, \lambda}(z|u)=\int p_{T, \lambda}(z|u) p_{\theta}(x)dx=\int p_{\theta}(z)p_{\theta}(x|u)dx=p_{\theta}(z)$ contradicts to that the label prior is non-trivial.

\subsubsection{Proof of Proposition $3$}
For any sample $(x, u)$, $\theta=(f, T, \lambda)$, $\phi$, and $\alpha \in [0, 1]$,
\begin{equation*}
\resizebox{1.0\hsize}{!}{
$\begin{aligned}
    &\log p_{\theta}(x|u)\\
    &= \log \Big( \int \frac{p_{\theta}(x, z|u)}{\alpha q_{\phi}(z|x) + (1-\alpha)q_{\phi}(z|x,u)}\Big( \alpha q_{\phi}(z|x) + (1-\alpha)q_{\phi}(z|x,u) \Big)dz \Big)\\
    &\geq \int \Big( \log \frac{p_{\theta}(x, z|u)}{\alpha q_{\phi}(z|x) + (1-\alpha)q_{\phi}(z|x,u)} \Big)\Big( \alpha q_{\phi}(z|x) + (1-\alpha)q_{\phi}(z|x,u) \Big)dz\\
    &= \mathbb{E}_{\alpha q_{\phi}(z|x) + (1-\alpha)q_{\phi}(z|x,u)} [ \log p_{\theta}(x,z|u)-p_{T,\lambda}(z|u)]\\
    &\quad-\mathcal{D}_{\text{KL}}(\alpha q_{\phi}(z|x) + (1-\alpha)q_{\phi}(z|x,u)||p_{T,\lambda}(z|u))
\end{aligned}$}
\end{equation*}
holds by Jensen's inequality. Now, $p_{\theta}(x,z|u)=p_{f}(x|z)p_{T,\lambda}(z|u)$ concludes the proof.

\subsubsection{Proof of Proposition $4$}
By the definition of $\alpha^{*}(x,u)$ and the existence of $\phi$ satisfying $q_{\phi}(z|x,u)=p_{\theta}(z|x,u)$ with probability (w.p.) $1$ w.r.t. $p(x,u)$, 
\begin{equation*}
    \begin{split}
        &\underset{\phi}{\text{max}}\mathbb{E}_{p(x,u)}\text{ELBO}_{\theta, \phi}(\alpha^{*}(x,u);x,u)\\
        &\geq \underset{\phi}{\text{max}}\mathbb{E}_{p(x,u)}\text{ELBO}_{\theta, \phi}(0;x,u)\\
        &= \mathbb{E}_{p(x,u)}\log p_{\theta}(x|u) - \underset{\phi}{\text{min}} \mathcal{D}_{\text{KL}}(q_{\phi}(z|x,u)||p_{\theta}(z|x,u))\\
        &= \mathbb{E}_{p(x,u)}\log p_{\theta}(x|u).
    \end{split}
\end{equation*}
Since $\text{ELBO}_{\theta, \phi}(\alpha^{*}(x,u);x,u)$ is a lower bound of $\log p_{\theta}(x|u)$, the proof is concluded.

\subsubsection{Proof of Proposition $5$}
For any sample $(x, u)$, $\theta=(f, T, \lambda)$, $\phi$, and $\alpha \in [0, 1]$, $\text{ELBO}_{\theta,\phi}(\alpha;x,u)$ can be expressed as
\begin{equation*}\resizebox{1.0\hsize}{!}{
$\begin{aligned}
    &\alpha \mathbb{E}_{q_{\phi}(z|x)} \log p_{f}(x|z) + (1-\alpha)\mathbb{E}_{q_{\phi}(z|x,u)} \log p_{f}(x|z)\\
    &-\int \Big( \log \frac{\alpha q_{\phi}(z|x) + (1-\alpha)q_{\phi}(z|x,u)}{p_{T,\lambda}(z|u)} \Big)(\alpha q_{\phi}(z|x) + (1-\alpha)q_{\phi}(z|x,u))dz\\
    &=\alpha \Big(\text{ELBO}_{\theta,\phi}(1;x,u) + \int \Big( \log \frac{q_{\phi}(z|x)}{p_{T,\lambda}(z|u)} \Big)q_{\phi}(z|x)dz \Big)\\
    &\quad +(1-\alpha) \Big(\text{ELBO}_{\theta,\phi}(0;x,u) + \int \Big( \log \frac{q_{\phi}(z|x,u)}{p_{T,\lambda}(z|u)} \Big)q_{\phi}(z|x,u)dz \Big)\\
    &\quad -\alpha \int \Big( \log \frac{\alpha q_{\phi}(z|x) + (1-\alpha)q_{\phi}(z|x,u)}{p_{T,\lambda}(z|u)} \Big) q_{\phi}(z|x)dz\\
    &\quad-(1-\alpha) \int \Big( \log \frac{\alpha q_{\phi}(z|x) + (1-\alpha)q_{\phi}(z|x,u)}{p_{T,\lambda}(z|u)} \Big) q_{\phi}(z|x,u)dz\\
    &=\alpha \text{ELBO}_{\theta,\phi}(1;x,u) + (1-\alpha)\text{ELBO}_{\theta,\phi}(0;x,u) \\
    &\quad +\alpha \mathcal{D}_{\text{KL}}(q_{\phi}(z|x)||\alpha q_{\phi}(z|x) + (1-\alpha)q_{\phi}(z|x,u))\\
    &\quad + (1-\alpha)\mathcal{D}_{\text{KL}}(q_{\phi}(z|x,u)||\alpha q_{\phi}(z|x)+ (1-\alpha)q_{\phi}(z|x,u))\\
    &=\alpha \text{ELBO}_{\theta,\phi}(1;x,u)+(1-\alpha)\text{ELBO}_{\theta,\phi}(0;x,u)\\
    &\quad +\alpha \mathcal{D}^{1-\alpha}_{\text{skew}}(q_{\phi}(z|x)||q_{\phi}(z|x,u))+(1-\alpha) \mathcal{D}^{\alpha}_{\text{skew}}(q_{\phi}(z|x,u)||q_{\phi}(z|x)).
\end{aligned}$}
\end{equation*}

\subsubsection{Proof of Proposition 6}
For any sample $(x, u)$ and $\alpha \in [0, 1]$, the first derivative of $\text{ELBO}_{\theta, \phi}(\alpha;x,u)$ can be expressed as
\begin{equation*}\resizebox{1.0\hsize}{!}{
$\begin{aligned}
    &\text{ELBO}_{\theta, \phi}^{(1)}(\alpha;x,u)\\
    &=\mathbb{E}_{q_{\phi}(z|x)} \log p_{f}(x|z) - \mathbb{E}_{q_{\phi}(z|x,u)} \log p_{f}(x|z) \\
    &\quad- \frac{d}{d\alpha} \int \Big( \log \frac{\alpha q_{\phi}(z|x) + (1-\alpha)q_{\phi}(z|x,u)}{p_{T,\lambda}(z|u)} \Big) (\alpha q_{\phi}(z|x) + (1-\alpha)q_{\phi}(z|x,u))dz \\
    &=\Big( \text{ELBO}_{\theta, \phi}(1;x,u)+\mathcal{D}_{\text{KL}}(q_{\phi}(z|x)||p_{T,\lambda}(z|u)) \Big)\\
    &\quad- \Big( \text{ELBO}_{\theta, \phi}(0;x,u)+\mathcal{D}_{\text{KL}}(q_{\phi}(z|x,u)||p_{T,\lambda}(z|u)) \Big)\\
    &\quad -\Big[ \int \frac{q_{\phi}(z|x)-q_{\phi}(z|x,u)}{\alpha q_{\phi}(z|x) + (1-\alpha)q_{\phi}(z|x,u)}(\alpha q_{\phi}(z|x) + (1-\alpha)q_{\phi}(z|x,u))dz \\
    &\quad+\int \Big( \log \frac{\alpha q_{\phi}(z|x) + (1-\alpha)q_{\phi}(z|x,u)}{p_{T,\lambda}(z|u)} \Big)(q_{\phi}(z|x)-q_{\phi}(z|x,u))dz \Big]\\
    &=\text{ELBO}_{\theta,\phi}(1;x,u)-\text{ELBO}_{\theta,\phi}(0;x,u)\\
    &\quad+\mathcal{D}_{\text{KL}}(q_{\phi}(z|x)||\alpha q_{\phi}(z|x) + (1-\alpha)q_{\phi}(z|x,u)) \\
    &\quad-\mathcal{D}_{\text{KL}}(q_{\phi}(z|x,u)||\alpha q_{\phi}(z|x) + (1-\alpha)q_{\phi}(z|x,u)).
\end{aligned}$}
\end{equation*}
With this, the second derivative of $\text{ELBO}_{\theta, \phi}(\alpha;x,u)$ can be expressed as
\begin{equation*}
\resizebox{1.0\hsize}{!}{
$\begin{aligned}
    &\text{ELBO}_{\theta,\phi}^{(2)}(\alpha;x,u)\\
    &= -\frac{d}{d\alpha} \int \Big( \log \frac{\alpha q_{\phi}(z|x) + (1-\alpha) q_{\phi}(z|x,u)}{p_{T, \lambda}(z|u)} \Big) (q_{\phi}(z|x)-q_{\phi}(z|x,u))dz\\
    &= - \int \Big( \log \frac{q_{\phi}(z|x)-q_{\phi}(z|x,u)}{\alpha q_{\phi}(z|x) + (1-\alpha) q_{\phi}(z|x,u)} \Big) (q_{\phi}(z|x)-q_{\phi}(z|x,u))dz\\
    &= -\int \frac{(q_{\phi}(z|x)-q_{\phi}(z|x,u))^{2}}{\alpha q_{\phi}(z|x) + (1-\alpha)q_{\phi}(z|x,u)}dz.
\end{aligned}$
} \end{equation*}
Thus, $\text{ELBO}_{\theta,\phi}^{(2)}(\alpha;x,u) \leq 0$ for any $\alpha \in [0, 1]$. The equality holds if and only if $q_{\phi}(z|x)=q_{\phi}(z|x,u)$ for all $z$, which concludes the proof.

\subsection{Proofs of Theorems}\label{AppendixA.3}
\subsubsection{Proof of Theorem $1$}
%yk: or, for a given non-trivial label prior, we can derive that iVAEs occur posterior collapse.
The proof is extended from the proof of Proposition 2 in \cite{dai2020usual}.
We first provide detailed formulations of conditions for Theorem $1$, and then derive Theorem $1$. We denote the parameter in decoder networks by $\psi$. The reconstruction error at $x$ with posterior $N(\mu, \sigma)$ can be expressed as $\mathbb{E}_{Z \sim N(\mu, \sigma^{2})} \lvert\lvert x - f(Z; \psi) \rvert\rvert_{2}^{2}$. We formulate conditions $(C1)$ and $(C2)$ \citep{dai2020usual}.\\ \\
$(C1)$ $\frac{\partial}{\partial\mu} \mathbb{E}_{Z \sim N(\mu, \sigma^{2})} \lvert\lvert x - f(Z; \psi) \rvert\rvert_{2}^{2}$ and $\frac{\partial}{\partial\sigma} \mathbb{E}_{Z \sim N(\mu, \sigma^{2})} \lvert\lvert x - f(Z; \psi) \rvert\rvert_{2}^{2}$ are $L$-Lipschitz continuous.\\
$(C2)$ $\frac{\partial}{\partial\sigma} \mathbb{E}_{Z \sim N(\mu, \sigma^{2})} \lvert\lvert x - f(Z; \psi) \rvert\rvert_{2}^{2} \geq c$ for some $c>0$.\\ \\
The $(C1)$ means that the reconstruction error is sufficiently smooth and its partial derivatives have bounded slopes w.r.t. $(\mu, \sigma)$. The $(C2)$ means that the decoder increases the reconstruction error as the uncertainty of latent variables increases. The $(C2)$ does not hold when the decoder is degenerated, i.e., $f(z; \psi)$ is constant.

Now, we derive Theorem $1$. We show that for any iVAEs satisfying $(C1)$ and $(C2)$, there is a posterior collapse case having larger value of ELBO. As in \cite{dai2020usual}, we consider that the observation noise $\epsilon$ follows a Gaussian distribution $N(0, \gamma I_{d_{X}})$. We reparametrize the decoder network with a scale parameter $w \in [0, 1]$ and denote the parameter for the decoder by $\psi = (w, \psi \backslash w)$. The output of the decoder with $\psi$ can be expressed as $f(z; \psi)=f(wz;\psi \backslash w)$. We denote means and standard deviations of posterior distributions by
\begin{equation*}
    \begin{split}
        &m_{Z|X,U}(\{(x_{i}, u_{i})\}_{i=1}^{n}; \phi)\\
        &:=(\mu_{Z|X,U}(x_{1}, u_{1};\phi), \dots, \mu_{Z|X,U}(x_{n}, u_{n};\phi))
    \end{split}
\end{equation*}
and
\begin{equation*}
    \begin{split}
        &s_{Z|X,U}(\{(x_{i}, u_{i})\}_{i=1}^{n}; \phi)\\
        &:=(\sigma_{Z|X,U}(x_{1},u_{1};\phi), \dots, \sigma_{Z|X,U}(x_{n},u_{n};\phi)),
    \end{split}
\end{equation*}
respectively, where $\mu_{Z|X,U}(x_{i}, u_{i};\phi)$ and $\sigma_{Z|X,U}(x_{i},u_{i};\phi)$ denote the mean and standard deviations of posterior distributions at $i$-th datum, respectively.

For any iVAEs with decoder parameter $\tilde{\psi}$ satisfying $(C1)$ and $(C2)$, posterior parameter $\tilde{\phi}$, and label prior parameter $(\tilde{T}, \tilde{\lambda})$, values evaluated at $\tilde{\phi}$ are denoted by $\tilde{m}_{Z|X,U}$ and $\tilde{s}_{Z|X,U}$. For simplicity, we use $m_{Z|X,U}$, $s_{Z|X,U}$, $\tilde{m}_{Z|X,U}$, and $\tilde{s}_{Z|X,U}$ if there is no confusion and use $m_{Z|X, U,i}$, $s_{Z|X, U, i}$, $\tilde{m}_{Z|X, U, i}$, and $\tilde{s}_{Z|X, U, i}$, respectively, to indicate their $i$-th components. In a similar manner, means and standard deviations by label prior networks are denoted by $m_{Z|U}$ and $s_{Z|U}$, respectively. We denote $(m^{\text{Scale}}_{Z|X,U,i})_{j}=((m_{Z|X,U,i})_{j}-(\tilde{m}_{Z|U,i})_{j})/(\tilde{s}_{Z|U,i})_{j}$, $(s^{\text{Scale}}_{Z|X,U,i})_{j}=(s_{Z|X,U,i})_{j}/(\tilde{s}_{Z|U,i})_{j}$, $(\tilde{m}^{\text{Scale}}_{Z|X,U,i})_{j}=((\tilde{m}_{Z|X,U,i})_{j}-(\tilde{m}_{Z|U,i})_{j})/(\tilde{s}_{Z|U,i})_{j}$, and $(\tilde{s}^{\text{Scale}}_{Z|X,U,i})_{j}=(\tilde{s}_{Z|X,U,i})_{j}/(\tilde{s}_{Z|U,i})_{j}$. With these terms, we can express the reconstruction error at the $i$-th datum as
\begin{equation*}
\begin{split}
    &r(w(m^{\text{Scale}}_{Z|X,U,i}+\tilde{m}_{Z|U,i}/\tilde{s}_{Z|U,i}), ws^{\text{Scale}}_{Z|X,U,i}, \tilde{\psi}\backslash\tilde{w}, x_{i})\\
    &=\mathbb{E}_{\epsilon \sim N(0, I_{d})} \lVert x_{i} - f\big( \tilde{s}_{Z|U}(w(m^{\text{Scale}}_{Z|X,U,i}+\tilde{m}_{Z|U,i}/\tilde{s}_{Z|U,i}))\\
    &\quad +\tilde{s}_{Z|U,i}(ws^{\text{Scale}}_{Z|X,U,i}) \epsilon; \tilde{\psi} \backslash \tilde{w}\big)\rVert_{2}^{2}
\end{split}
\end{equation*}
and the average reconstruction error by $\bar{r}(w(m^{\text{Scale}}_{Z|X,U}+\tilde{m}_{Z|U}/\tilde{s}_{Z|U}), ws^{\text{Scale}}_{Z|X,U})$. Here, all the parameters in decoder and label prior but $w$ are fixed. Then, the average of the negative ELBO of iVAE evaluated with $\{(x_{i}, u_{i})\}_{i=1}^{n}$ is the same as 
\begin{equation*}
\begin{split}
    &h(m_{Z|X,U}, s_{Z|X,U}, w)\\ &:=\gamma^{-1}\bar{r}(w(m^{\text{Scale}}_{Z|X,U}+\tilde{m}_{Z|U}/\tilde{s}_{Z|U}), ws^{\text{Scale}}_{Z|X,U})+d_{X}\log \gamma \\
    &\quad + n^{-1}\sum_{i=1}^{n}2\mathcal{D}_{\text{KL}}(N(m^{\text{Scale}}_{Z|X,U,i}, (s^{\text{Scale}}_{Z|X,U,i})^{2})||N(0, 1))
\end{split}
\end{equation*}
up to constant addition and multiplication since $\mathcal{D}_{\text{KL}}(N(m_{Z|X,U,i}, s_{Z|X,U,i}^{2})||N(\tilde{m}_{Z|U,i}, \tilde{s}^{2}_{Z|U,i}))=\mathcal{D}_{\text{KL}}(N(m^{\text{Scale}}_{Z|X,U,i}, (s^{\text{Scale}}_{Z|X,U,i})^{2})||N(0, 1))$. We define $h^{\text{appr}}$, an approximation of an upper bound of $h$ based on the Taylor series of $\bar{r}$ in Lemma \hyperref[lem:h_appr]{1}. The $h^{\text{appr}}$ is equal to $h$ when $(m_{Z|X,U}, s_{Z|X,U}, w)=(\tilde{m}_{Z|X,U}, \tilde{s}_{Z|X,U}, \tilde{w})$ and is an upper bound of $h$ when $w(s_{Z|X,U,i})_{j} \in \{0, \tilde{w}(\tilde{s}_{Z|X,U, i})_{j} \}$ for all $i=1,\dots,n$ and $j=1,\dots,d_{Z}$.

\begin{lemma}\label{lem:h_appr} For any $\{(x_{i}, u_{i})\}_{i=1}^{n}$, we define 
\begin{align*}
    & h^{\text{appr}}(m_{Z|X,U}, s_{Z|X,U}, w;\tilde{m}_{Z|X,U}, \tilde{s}_{Z|X,U}, \tilde{w})\\
    &:=\gamma^{-1}\bar{r}^{\text{appr}}(w(m^{\text{Scale}}_{Z|X,U}+\tilde{m}_{Z|U}/\tilde{s}_{Z|U}), ws^{\text{Scale}}_{Z|X,U};\\
    &\quad \tilde{w}(\tilde{m}^{\text{Scale}}_{Z|X,U}+\tilde{m}_{Z|U}/\tilde{s}_{Z|U}), w\tilde{s}^{\text{Scale}}_{Z|X,U})+d_{X}\log \gamma\\
    &\quad+n^{-1}\sum_{i=1}^{n}2\mathcal{D}_{\text{KL}}(N(m^{\text{Scale}}_{Z|X,U,i}, (s^{\text{Scale}}_{Z|X,U,i})^{2})||N(0, 1))
\end{align*}
where $\bar{r}^{\text{appr}}(u, v; \tilde{u}, \tilde{v}):=\bar{r}(\tilde{u}, \tilde{v}) + (u-\tilde{u})^{T}\frac{\partial}{\partial u}\bar{r}(\tilde{u}, \tilde{v}) + \frac{L}{2} \lvert\lvert u-\tilde{u} \rvert\rvert_{2}^{2} + \sum_{j=1}^{D} g^{\text{appr}} \Big( v_{j}, \tilde{v}_{j}, \frac{\partial}{\partial v_{j}}\bar{r}(\tilde{u}, \tilde{v}) \Big)$ for any $u$, $v$, $\tilde{u}$ and $\tilde{v} \in \mathbb{R}^{D}$ and $g^{\text{appr}}: \mathbb{R}^{3} \to \mathbb{R}$ is defined as follows:
\begin{equation}\label{eq:g_appr}
\resizebox{1.0\hsize}{!}{
$\begin{aligned}
    &g^{\text{appr}}(v, \tilde{v}, \delta)\\
    &= \begin{cases} 
      -\frac{\delta^{2}}{2L} + \frac{\delta^{2}}{2L\tilde{v}^{2}}v^{2} & \text{if }v \geq \tilde{v}-\frac{\delta}{L}\text{ and } \{v,\tilde{v},\delta\}\geq0, \\
      \big(\frac{L\tilde{v}^{2}}{2}-\delta \tilde{v} \big)+\big(\frac{\delta}{\tilde{v}}-\frac{L}{2} \big)v^{2} & \text{if }v < \tilde{v}-\frac{\delta}{L}\text{ and } \{v,\tilde{v},\delta\}\geq0, \\
      \infty & \text{otherwise.}
    \end{cases}
\end{aligned}$}
\end{equation}
Then, 
\begin{equation*}
    \begin{split}
        & h^{\text{appr}}(\tilde{m}_{Z|X,U}, \tilde{s}_{Z|X,U}, \tilde{w}; \tilde{m}_{Z|X,U}, \tilde{s}_{Z|X,U}, \tilde{w}) \\
        & = h(\tilde{m}_{Z|X,U}, \tilde{s}_{Z|X,U}, \tilde{w})
    \end{split}
\end{equation*}
and
\begin{equation*}
    \begin{split}
        & h^{\text{appr}}(m_{Z|X,U}, s_{Z|X,U}, w; \tilde{m}_{Z|X,U}, \tilde{s}_{Z|X,U}, \tilde{w}) \\
        & \geq h(m_{Z|X,U}, s_{Z|X,U}, w)
    \end{split}
\end{equation*}
if $w(s_{Z|X,U, i})_{j} \in \{0, \tilde{w}(\tilde{s}_{Z|X,U, i})_{j} \}$ for all $i=1,\dots,n$ and $j=1,\dots,d_{Z}$.
\end{lemma}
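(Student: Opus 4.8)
The plan is to reduce the whole claim to a statement about the reconstruction term alone. Both $h$ and $h^{\text{appr}}$ carry the identical additive constant $d_{X}\log\gamma$ and the identical averaged KL term $n^{-1}\sum_{i}2\mathcal{D}_{\text{KL}}(N(m^{\text{Scale}}_{Z|X,U,i},(s^{\text{Scale}}_{Z|X,U,i})^{2})\|N(0,1))$; they differ only by replacing $\bar{r}$ with $\bar{r}^{\text{appr}}$, which enters with the positive coefficient $\gamma^{-1}$. Hence it suffices to prove, writing $u:=w(m^{\text{Scale}}_{Z|X,U}+\tilde{m}_{Z|U}/\tilde{s}_{Z|U})$ and $v:=ws^{\text{Scale}}_{Z|X,U}$, that (i) $\bar{r}^{\text{appr}}=\bar{r}$ at the reference point $(u,v)=(\tilde{u},\tilde{v})$, and (ii) $\bar{r}^{\text{appr}}(u,v)\geq \bar{r}(u,v)$ whenever every standard-deviation coordinate satisfies $v_{j}\in\{0,\tilde{v}_{j}\}$. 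The corner hypothesis of the lemma translates exactly into this last condition: dividing $w(s_{Z|X,U,i})_{j}\in\{0,\tilde{w}(\tilde{s}_{Z|X,U,i})_{j}\}$ by $(\tilde{s}_{Z|U,i})_{j}>0$ gives $v_{ij}\in\{0,\tilde{v}_{ij}\}$.

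For the equality (i), substitute $u=\tilde{u}$ and $v=\tilde{v}$ into $\bar{r}^{\text{appr}}$. The first-order term $(u-\tilde{u})^{T}\partial_{u}\bar{r}$ and the quadratic term $\tfrac{L}{2}\|u-\tilde{u}\|^{2}$ both vanish, so all that remains is $g^{\text{appr}}(\tilde{v}_{j},\tilde{v}_{j},\delta_{j})=0$ for each coordinate, with $\delta_{j}:=\partial_{v_{j}}\bar{r}(\tilde{u},\tilde{v})$. Condition $(C2)$ forces $\delta_{j}\geq c>0$, so the first branch of \eqref{eq:g_appr} applies at $v=\tilde{v}$, and the direct computation $-\delta_{j}^{2}/(2L)+(\delta_{j}^{2}/(2L\tilde{v}_{j}^{2}))\tilde{v}_{j}^{2}=0$ finishes this half. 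This yields $\bar{r}^{\text{appr}}(\tilde{u},\tilde{v})=\bar{r}(\tilde{u},\tilde{v})$ and hence the first displayed equation of the lemma.

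For the inequality (ii), I would decompose $\bar{r}(u,v)-\bar{r}(\tilde{u},\tilde{v})$ into a mean part and a coordinatewise standard-deviation part. The mean part is controlled by the descent lemma: since $\partial_{u}\bar{r}$ is $L$-Lipschitz by $(C1)$, moving the mean from $\tilde{u}$ to $u$ raises $\bar{r}$ by at most $(u-\tilde{u})^{T}\partial_{u}\bar{r}(\tilde{u},\tilde{v})+\tfrac{L}{2}\|u-\tilde{u}\|^{2}$, which are precisely the first two correction terms of $\bar{r}^{\text{appr}}$. The standard-deviation part is treated one coordinate at a time by integrating $\partial_{v_{j}}\bar{r}$ from $\tilde{v}_{j}$ down to the corner value, lower-bounding the integrand by the Lipschitz estimate $\partial_{v_{j}}\bar{r}(t)\geq \delta_{j}-L(\tilde{v}_{j}-t)$ from $(C1)$ together with the positivity $\partial_{v_{j}}\bar{r}\geq c>0$ from $(C2)$. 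Whether this linear extrapolation of the derivative stays nonnegative all the way down to the corner, i.e.\ whether $v_{j}$ lies above or below $\tilde{v}_{j}-\delta_{j}/L$, is exactly the split defining the two branches of $g^{\text{appr}}$, and at the two admissible corner values $v_{j}\in\{0,\tilde{v}_{j}\}$ the resulting piecewise quadratic dominates the true change. This is the extension of Proposition 2 of \cite{dai2020usual} to the iVAE setting.

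The main obstacle is precisely this standard-deviation bound in step (ii): verifying that the piecewise-quadratic $g^{\text{appr}}$ in \eqref{eq:g_appr} dominates the true per-coordinate change in $\bar{r}$ at the corner values $v_{j}\in\{0,\tilde{v}_{j}\}$ (and only there; it need not hold for intermediate $v_{j}$), matching the two branches to the two integration regimes with the correct constants, and combining the mean-shift and variance-collapse effects through the shared scale parameter $w$ and the scaled quantities $m^{\text{Scale}},s^{\text{Scale}},\tilde{s}^{\text{Scale}}$ without losing the global upper-bound property. The remaining bookkeeping — the reparametrization $f(z;\psi)=f(wz;\psi\backslash w)$ and the reduction of the label-prior-relative KL to the standard-normal-relative KL — is routine given the definitions, so the delicate point is the alignment of the two branches of $g^{\text{appr}}$ with the coordinatewise integration argument.
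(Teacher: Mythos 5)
Your reduction and your step (i) are sound, and they are exactly the paper's own: the paper's entire proof of this lemma consists of the observation that $h^{\text{appr}}$ differs from $h$ only by replacing $\bar{r}$ with $\bar{r}^{\text{appr}}$ (entering with the positive factor $\gamma^{-1}$), plus a citation of Section 3 of the supplement of \cite{dai2020usual} for precisely your properties (i) and (ii). So all of the mathematical content of your proposal lies in step (ii), which the paper outsources and you attempt directly --- and that is exactly the step you defer, asserting that the two branches of $g^{\text{appr}}$ ``match the two integration regimes with the correct constants.''

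They do not match as \eqref{eq:g_appr} is written; the pairing is backwards. Fix a coordinate, write $r(t)$ for $\bar{r}$ as a function of the $j$-th standard deviation with all other arguments held at the reference point, and let $\delta_j := \partial_{v_j}\bar{r}(\tilde{u},\tilde{v}) = r'(\tilde{v}_j)$. Your own bound $r'(t) \geq \max\{\delta_j - L(\tilde{v}_j - t),\, 0\}$ (the clipping at $0$ is where $(C2)$ enters) gives, at the corner $v_j = 0$,
\begin{equation*}
r(0) - r(\tilde{v}_j) \;\leq\;
\begin{cases}
\tfrac{L}{2}\tilde{v}_j^{2} - \delta_j\tilde{v}_j & \text{if } \delta_j \geq L\tilde{v}_j,\\[2pt]
-\tfrac{\delta_j^{2}}{2L} & \text{if } \delta_j < L\tilde{v}_j,
\end{cases}
\end{equation*}
i.e.\ the branch-2 constant is what is provable when $\delta_j \geq L\tilde{v}_j$, while the branch-1 constant $-\delta_j^{2}/(2L)$ is justified only when $\delta_j < L\tilde{v}_j$. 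But the condition in \eqref{eq:g_appr} at $v_j=0$, namely $0 \geq \tilde{v}_j - \delta_j/L$, selects branch 1 exactly when $\delta_j \geq L\tilde{v}_j$. Since $\tfrac{L}{2}\tilde{v}_j^{2} - \delta_j\tilde{v}_j + \tfrac{\delta_j^{2}}{2L} = \tfrac{(\delta_j - L\tilde{v}_j)^{2}}{2L} \geq 0$, the written value $g^{\text{appr}}(0,\tilde{v}_j,\delta_j) = -\delta_j^{2}/(2L)$ sits strictly below anything derivable from $(C1)$--$(C2)$ when $\delta_j > L\tilde{v}_j$, and the domination genuinely fails there: take $r'$ constant equal to $\delta_j$ (this satisfies $(C1)$ for every $L$ and $(C2)$ with $c = \delta_j$) and $\tilde{v}_j < \delta_j/(2L)$; then $r(0) - r(\tilde{v}_j) = -\delta_j\tilde{v}_j > -\delta_j^{2}/(2L)$. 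So the verification you postponed is not routine bookkeeping: carried out literally against \eqref{eq:g_appr} it fails, and your argument closes only if the two quadratics are attached to the complementary conditions --- evidently the intended definition, since under that swapped pairing both $v^{2}$-coefficients are automatically positive, which is exactly what the paper's proof of Theorem 1 later needs when it asserts $c_{i,j} > 0$ (under the written pairing the branch-2 coefficient $\delta_j/\tilde{v}_j - L/2$ can be negative). A second, smaller gap: your sequential mean-then-variance decomposition anchors $\delta_j$ at $(\tilde{u},\tilde{v})$ but integrates the variance change at the new mean $u$, ignoring cross terms of order $L\norm{u-\tilde{u}}\tilde{v}_j$; these are avoided by applying the descent lemma jointly in $(u,v)$, which is also what produces the branch-2 constants above.
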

\begin{proof}[Proof of Lemma 1] Section $3$ in the supplementary file of \cite{dai2020usual} showed that $\bar{r}^{\text{appr}}(\tilde{u}, \tilde{v}; \tilde{u}, \tilde{v})=\bar{r}(\tilde{u}, \tilde{v})$ and $\bar{r}^{\text{appr}}(u, v; \tilde{u}, \tilde{v}) \geq \bar{r}(u, v)$ if $v_{j} \in \{0, \tilde{v}_{j}\}$ for all $j$. It concludes the proof since the difference of $h^{\text{appr}}$ from $h$ is changing $\bar{r}$ to $\bar{r}^{\text{appr}}$.
\end{proof}
Let $c_{i, j}$ be coefficients of $v^{2}$ in \eqref{eq:g_appr} determined by $(v, \tilde{v}, \delta)=(w(s^{\text{Scale}}_{Z|X,U, i})_{j}, \tilde{w}(\tilde{s}^{\text{Scale}}_{Z|X,U, i})_{j}, \tilde{\delta}_{i,j})$ where $\tilde{\delta}_{i,j}:=\nabla\bar{r}(\tilde{w}(\tilde{m}^{\text{Scale}}_{Z|X,U}+\tilde{m}_{Z|U}/\tilde{s}_{Z|U}), \tilde{w}\tilde{s}^{\text{Scale}}_{Z|X,U})_{(n+i-1)d_{Z}+j}$.
The $c_{i,j}$ is positive and finite since $w(s^{\text{Scale}}_{Z|X, U, i})_{j} \geq 0$, $\tilde{w}(\tilde{s}^{\text{Scale}}_{Z|X, U, i})_{j} \geq 0$, and $0 < \tilde{\delta}_{i,j} \leq L$ by $(C1)$ and $(C2)$.

We denote the minimizer of $h^{\text{appr}}$ by $(m^{*}_{Z|X,U}(w), s^{*}_{Z|X,U}(w))$. Since
\begin{equation*}
\resizebox{1.0\hsize}{!}{
$\begin{aligned}
    & h^{\text{appr}}(m_{Z|X,U}, s_{Z|X,U}, w;\tilde{m}_{Z|X,U}, \tilde{s}_{Z|X,U}, \tilde{w})\\
    &=\text{Const.}+ n^{-1}\sum_{i=1}^{n} \sum_{j=1}^{d_{Z}} \bigg( \gamma^{-1} \Big( w(m^{\text{Scale}}_{Z|X, U, i}+\tilde{m}_{Z|U,i}/\tilde{s}_{Z|U,i})_{j} \tilde{\delta}_{i,j}\\
    &\quad+\frac{L}{2} \big(w^{2}(m^{\text{Scale}}_{Z|X, U, i}+\tilde{m}_{Z|U,i}/\tilde{s}_{Z|U,i})_{j}^{2}\\
    &\quad- 2w(m^{\text{Scale}}_{Z|X, U, i}+\tilde{m}_{Z|U,i}/\tilde{s}_{Z|U,i})_{j}\tilde{w}(\tilde{m}^{\text{Scale}}_{Z|X, U, i}+\tilde{m}_{Z|U,i}/\tilde{s}_{Z|U,i})_{j}\big)\\
    &\quad+c_{i,j}w^{2} (s^{\text{Scale}}_{Z|X, U, i})_{j}^{2}\Big) + (m^{\text{Scale}}_{Z|X, U, i})_{j}^{2} + (s^{\text{Scale}}_{Z|X, U, i})_{j}^{2}-\log(s^{\text{Scale}}_{Z|X, U, i})_{j}^{2} \bigg),
\end{aligned}$
}
\end{equation*}
$h^{\text{appr}}$ is a quadratic function w.r.t. $(m^{\text{Scale}}_{Z|X, U, i}+\tilde{m}_{Z|U,i}/\tilde{s}_{Z|U,i})_{j}$ and coefficients of second-order and first-order terms are $n^{-1}(\gamma^{-1}Lw^{2}/2+1)$ and $n^{-1}\big( \gamma^{-1}w(\tilde{\delta}_{i,j}-L\tilde{w}(\tilde{m}_{Z|X,U,i})_{j}) - 2 \tilde{m}_{Z|U,i}/\tilde{s}_{Z|U,i} \big)$, respectively.
This implies
\begin{equation*}
\resizebox{1.0\hsize}{!}{
$\begin{aligned}
    &(m^{*}_{Z|X,U, i}(w))_{j}\\
    &=\bigg( w(\tilde{s}_{Z|U,i})_{j}\big( L\tilde{w}(\tilde{m}_{Z|X,U,i})_{j} - \tilde{\delta}_{i,j} \big) +2\gamma (\tilde{m}_{Z|U,i})_{j} \bigg)/(2\gamma+Lw^{2}).
\end{aligned}$}
\end{equation*}
For $(s_{Z|X,i})_{j}$, $\partial h^{\text{appr}}/\partial(s^{\text{Scale}}_{Z|X, i})_{j}^{2}=n^{-1}(\gamma^{-1} c_{i,j}w^{2} + 1 - 1/(s^{\text{Scale}}_{Z|X, i})_{j}^{2})$ and $\partial^{2} h^{\text{appr}}/\partial\big( (s^{\text{Scale}}_{Z|X, i})_{j}^{2}\big)^{2}=n^{-1}/(s^{\text{Scale}}_{Z|X, i})_{j}^{4}>0$ imply $(s^{*}_{Z|X,U, i}(w))_{j}^{2}=(\tilde{s}_{Z|U,i})_{j}^{2}(\gamma^{-1}c_{i,j}w^{2}+1)^{-1}$.
By substituting $(m^{*}_{Z|X,U}(w), s^{*}_{Z|X,U}(w))$, we have
\begin{equation*}
\begin{split}
    &\partial h^{\text{appr}}(m^{*}_{Z|X}(w), s^{*}_{Z|X}(w), w;\tilde{m}_{Z|X}, \tilde{s}_{Z|X}, \tilde{w})/\partial w^{2}\\
    &= n^{-1}\sum_{i=1}^{n}\sum_{j=1}^{d_{Z}}\bigg(\frac{c_{i,j}}{\gamma + c_{i,j}w^{2}} + O(\gamma^{-2})\bigg).
\end{split}
\end{equation*}
Since $c_{i,j}$ is positive, this partial derivative is positive for all $w \in [0, 1]$ when $\gamma$ is sufficiently large. In this case, the optimal $w$ is zero and $\big( (m^{*}_{Z|X, U,i})_{j}(0), (s^{*}_{Z|X, U,i})_{j}(0) \big)=((\tilde{m}_{Z|U,i})_{j}, (\tilde{s}_{Z|U,i})_{j})$, i.e., posterior collapse cases.
%Note that this global optima is feasible since for any $(T, \lambda)$, there is $\phi$ satisfying $q_{\phi}(z|x,u)=p_{T, \lambda}(z|u)$ for all $(x,u)$.
By Lemma \hyperref[lem:h_appr]{1}, $h(\tilde{m}_{Z|U}, \tilde{s}_{Z|U}, 0) \leq h^{\text{appr}}(\tilde{m}_{Z|U}, \tilde{s}_{Z|U}, 0;\tilde{m}_{Z|X,U}, \tilde{s}_{Z|X,U}, \tilde{w})$ and $h^{\text{appr}}(\tilde{m}_{Z|X,U}, \tilde{s}_{Z|X,U}, \tilde{w};\tilde{m}_{Z|X,U}, \tilde{s}_{Z|X,U}, \tilde{w})=h(\tilde{m}_{Z|X,U}, \tilde{s}_{Z|X,U}, \tilde{w})$. Since $(\tilde{m}_{Z|U}, \tilde{s}_{Z|U}, 0)$ is the global optima of $h^{\text{appr}}$, $h(\tilde{m}_{Z|U}, \tilde{s}_{Z|U}, 0) < h(\tilde{m}_{Z|X,U}, \tilde{s}_{Z|X,U}, \tilde{w})$. That is, there is a posterior collapse case whose value of ELBO is better than current networks. Thus, the iVAEs are worse than the posterior collapse case.

\subsubsection{Proof of Theorem $2$}
We provide a proof by contradiction. Let $q_{\phi}(z|x,u)=p_{T,\lambda}(z|u)$ holds w.p. $1$. Since $\alpha^{*}(x,u)>0$ and $\alpha^{*}(x,u)q_{\phi}(z|x)+(1-\alpha^{*}(z|x,u)q_{\phi}(z|x,u)=p_{T,\lambda}(z|u)$ w.p. $1$, we have $q_{\phi}(z|x)=p_{T, \lambda}(z|u)$, which contradicts to that $p_{T,\lambda}(z|u)$ is non-trivial by Proposition $1$.

\subsubsection{Proof of Theorem \hyperref[thm:posivier_margin]{3}}
We first provide lemmas with proofs, and then derive the theorem.
\begin{lemma}\label{lemA.1} (Equation (18) in \cite{nishiyama2019new}) For any $t \in [0, 1]$, real-valued function $g$, and probability density functions $p(z)$ and $q(z)$, $\partial\mathcal{D}_{\text{KL}}(p(z)||(1-t) p(z) + t q(z))/\partial t$ is greater than or equal to $t(\mathbb{E}_{q(z)}[g(z)]-\mathbb{E}_{p(z)}[g(z)])^{2}/\big( t(1-t)(\mathbb{E}_{q(z)}[g(z)]-\mathbb{E}_{p(z)}[g(z)])^{2} + (1-t) Var_{p(z)}[g(z)] + t Var_{q(z)}[g(z)] \big)$.
\end{lemma}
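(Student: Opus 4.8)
The plan is to differentiate the skew divergence explicitly, recognize the derivative as a scaled chi-squared divergence, and then bound that chi-squared quantity from below by a Cauchy--Schwarz (Hammersley--Chapman--Robbins) argument that lets the arbitrary function $g$ enter.

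First I would write $m_{t}(z):=(1-t)p(z)+tq(z)$ for the mixture, so that $\partial m_{t}/\partial t=q-p$. Differentiating $\mathcal{D}_{\text{KL}}(p\|m_{t})=\int p\log p\,dz-\int p\log m_{t}\,dz$ under the integral sign gives
\[
\frac{\partial}{\partial t}\mathcal{D}_{\text{KL}}(p\|m_{t})=-\int\frac{p(q-p)}{m_{t}}\,dz.
\]
Using the identity $p=m_{t}+t(p-q)$ to write $p/m_{t}=1+t(p-q)/m_{t}$, together with $\int(p-q)\,dz=0$, this collapses to
\[
\frac{\partial}{\partial t}\mathcal{D}_{\text{KL}}(p\|m_{t})=t\int\frac{(p-q)^{2}}{m_{t}}\,dz=:t\,\chi_{t}^{2}.
\]
It therefore suffices to show $\chi_{t}^{2}\ge \Delta^{2}/\big(t(1-t)\Delta^{2}+(1-t)\mathrm{Var}_{p}[g]+t\,\mathrm{Var}_{q}[g]\big)$, where $\Delta:=\mathbb{E}_{q}[g]-\mathbb{E}_{p}[g]$; the case $t=0$ is trivial since both sides vanish.

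For the lower bound I would apply Cauchy--Schwarz with respect to the base density $m_{t}$: for any $h$,
\[
\Big(\int(p-q)h\,dz\Big)^{2}=\Big(\int\frac{p-q}{\sqrt{m_{t}}}\cdot\sqrt{m_{t}}\,h\,dz\Big)^{2}\le\chi_{t}^{2}\int m_{t}h^{2}\,dz,
\]
so that $\chi_{t}^{2}\ge(\int(p-q)h\,dz)^{2}/\int m_{t}h^{2}\,dz$. Choosing $h=g-c$ with $c:=\mathbb{E}_{m_{t}}[g]=(1-t)\mathbb{E}_{p}[g]+t\mathbb{E}_{q}[g]$ makes the numerator exactly $\Delta^{2}$ (the shift drops out because $\int(p-q)\,dz=0$), while the denominator is $(1-t)\mathbb{E}_{p}[(g-c)^{2}]+t\mathbb{E}_{q}[(g-c)^{2}]$. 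Expanding each expectation as variance plus squared bias and using $\mathbb{E}_{p}[g]-c=-t\Delta$ and $\mathbb{E}_{q}[g]-c=(1-t)\Delta$, the two bias terms combine to $(1-t)t^{2}\Delta^{2}+t(1-t)^{2}\Delta^{2}=t(1-t)\Delta^{2}$, giving precisely the stated denominator. Multiplying the bound on $\chi_{t}^{2}$ by $t$ recovers the lemma.

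The variance decomposition in the last step is routine algebra; the step that needs genuine care is justifying differentiation under the integral sign for $\int p\log m_{t}\,dz$ (a dominated-convergence condition, noting that $m_{t}\ge(1-t)p$ keeps the integrand controlled away from $t=1$). The content-bearing idea is recognizing the Hammersley--Chapman--Robbins structure, which is what allows an \emph{arbitrary} $g$ to enter through Cauchy--Schwarz; the particular choice $c=\mathbb{E}_{m_{t}}[g]$ is exactly the minimizer of $\int m_{t}(g-c)^{2}\,dz$ over shifts $c$, which is why it yields the tightest and hence the stated bound.
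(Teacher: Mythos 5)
Your proof is correct: the identity $\partial_t \mathcal{D}_{\text{KL}}(p\|m_t) = t\int (p-q)^2/m_t\,dz$ (using $\int(q-p)\,dz=0$ and $p/m_t = 1 + t(p-q)/m_t$), the Cauchy--Schwarz step with $h = g - \mathbb{E}_{m_t}[g]$, and the variance bookkeeping $(1-t)t^2\Delta^2 + t(1-t)^2\Delta^2 = t(1-t)\Delta^2$ all check out and yield exactly the stated bound. Note that the paper itself gives no proof of this lemma---it imports it as Equation (18) of \cite{nishiyama2019new}---and your Hammersley--Chapman--Robbins argument is precisely the derivation underlying that cited result, so you have reconstructed the source's proof (thereby supplying the justification the paper omits) rather than found a genuinely different route.
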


\begin{lemma}\label{lemA.2} For any datum $(x,u)$ and positive number $\epsilon$, if there is a function $g: \mathcal{Z} \to \mathbb{R}$ satisfying $\text{SNR}(g) \geq 1/\epsilon$, then
$$\text{ELBO}_{\theta, \phi}(\alpha; x, u)-\text{ELBO}_{\theta, \phi}(0; x, u) \geq \text{LB}(\alpha, \epsilon, \Delta_{1-0}(x, u))$$
where $\text{LB}(\alpha, \epsilon, \Delta_{1-0}(x, u)):=\alpha \Delta_{1-0}(x, u) + \alpha \int_{\alpha}^{1} (1-t)/(t(1-t) + \epsilon) dt + (1-\alpha) \int_{0}^{\alpha}t/(t(1-t) + \epsilon)dt$.
\end{lemma}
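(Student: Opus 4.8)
The plan is to reduce the claimed inequality to the preceding lemma (the Nishiyama bound, Equation (18) in the cited work), applied separately to the two skew-divergence terms produced by the decomposition in Proposition 5. First I would invoke that decomposition and subtract $\text{ELBO}_{\theta,\phi}(0;x,u)$ from both sides. The linear part of the decomposition is $\alpha\,\text{ELBO}_{\theta,\phi}(1;x,u)+(1-\alpha)\text{ELBO}_{\theta,\phi}(0;x,u)$, so after subtraction it collapses to $\alpha\Delta_{1-0}(x,u)$. It therefore suffices to prove the two term-wise bounds
\[
\mathcal{D}^{1-\alpha}_{\text{Skew}}(q_{\phi}(z|x)||q_{\phi}(z|x,u)) \ge \int_{\alpha}^{1} \frac{1-t}{t(1-t)+\epsilon}\,dt
\]
and
\[
\mathcal{D}^{\alpha}_{\text{Skew}}(q_{\phi}(z|x,u)||q_{\phi}(z|x)) \ge \int_{0}^{\alpha} \frac{t}{t(1-t)+\epsilon}\,dt,
\]
since scaling these by $\alpha$ and $1-\alpha$ and adding reproduces the definition of $\text{LB}$.

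For each skew divergence I would write it as the integral of its own $s$-derivative. Since $\mathcal{D}^{s}_{\text{Skew}}(p||q)=\mathcal{D}_{\text{KL}}(p||(1-s)p+sq)$ vanishes at $s=0$, the fundamental theorem of calculus gives $\mathcal{D}^{s_{0}}_{\text{Skew}}(p||q)=\int_{0}^{s_{0}}\partial_{s}\mathcal{D}^{s}_{\text{Skew}}(p||q)\,ds$. Applying the preceding lemma to each integrand lower-bounds $\partial_{s}\mathcal{D}^{s}_{\text{Skew}}$ by a ratio whose numerator is $s$ times the squared mean gap of $g$ and whose denominator is a convex combination of that squared gap and the two variances of $g$. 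Writing $D^{2}$ for the squared mean gap and $v_{a},v_{b}$ for the variances of $g$ under $q_{\phi}(z|x)$ and $q_{\phi}(z|x,u)$, the hypothesis $\text{SNR}(g)\ge 1/\epsilon$ forces $v_{a}\le\epsilon D^{2}$ and $v_{b}\le\epsilon D^{2}$, so every convex combination of the variances is at most $\epsilon D^{2}$. Dividing numerator and denominator by $D^{2}$ then replaces the variance contribution by $\epsilon$, bounding each integrand below by $s/(s(1-s)+\epsilon)$.

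This yields the second inequality directly with $t=s$ and $s_{0}=\alpha$. For the first, I would apply the lemma with $p=q_{\phi}(z|x)$, $q=q_{\phi}(z|x,u)$ and $s_{0}=1-\alpha$, and then substitute $t=1-s$, which sends $[0,1-\alpha]$ to $[\alpha,1]$ and converts $s/(s(1-s)+\epsilon)$ into $(1-t)/(t(1-t)+\epsilon)$, matching the target integral. The main obstacle I expect is purely bookkeeping: one must keep straight which of $q_{\phi}(z|x)$ and $q_{\phi}(z|x,u)$ plays the role of $p$ versus $q$ in each invocation of the lemma, since the two skew divergences swap these arguments, and one must apply the variance-to-$\epsilon$ replacement to the correct convex combination both before and after the change of variables. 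The crucial inequality $(1-s)v_{a}+s v_{b}\le\epsilon D^{2}$ holds only because \emph{both} variances are individually controlled by the SNR assumption; beyond this there is no analytic subtlety, merely a verification that the substitution aligns the limits and integrand.
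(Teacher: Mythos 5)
Your proposal is correct and takes essentially the same route as the paper's proof: both reduce the claim via the Proposition 5 decomposition to bounding the two skew-divergence terms, integrate the Nishiyama derivative bound (Lemma 2) from $0$ using the fact that the skew divergence vanishes there, and exploit that the SNR hypothesis controls \emph{both} variances so any convex combination is at most $\epsilon$ times the squared mean gap, with the substitution $t=1-s$ aligning the $\int_{\alpha}^{1}$ term. The paper compresses your two term-wise bounds into its displayed inequalities (lem2\_2) and (lem2\_3) and handles the second "in a similar way," but the argument is identical.
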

\begin{proof}[Proof of Lemma 3]
By Proposition 5,
\begin{equation}\label{eqn:lem2_1}
\resizebox{1.0\hsize}{!}{
$\begin{aligned}
    &\text{ELBO}_{\theta, \phi}(\alpha;x,u)-\text{ELBO}_{\theta, \phi}(0; x, u)\\
    &=\alpha \Delta_{1-0}(x,u) + \alpha \mathcal{D}_{\text{KL}}(q_{\phi}(z|x)||\alpha q_{\phi}(z|x) + (1-\alpha) q_{\phi}(z|x,u))\\
    &\quad+ (1-\alpha) \mathcal{D}_{\text{KL}}(q_{\phi}(z|x,u)||\alpha q_{\phi}(z|x) + (1-\alpha) q_{\phi}(z|x,u)).
\end{aligned}$}
\end{equation}
By substituting $q_{\phi}(z|x)$ and $q_{\phi}(z|x,u)$ to $q$ and $p$ in Lemma \hyperref[lemA.1]{2}, respectively, and integrating both sides from $0$ to $\alpha$, we have
\begin{equation*}
\resizebox{1.0\hsize}{!}{
$\begin{aligned}
    &\mathcal{D}_{\text{KL}}(q_{\phi}(z|x,u)||\alpha q_{\phi}(z|x) + (1-\alpha) q_{\phi}(z|x,u)) \\
    &\geq \int_{0}^{\alpha} \bigg( \big( t(\mathbb{E}_{q_{\phi}(z|x,u)}[g(z)]-\mathbb{E}_{q_{\phi}(z|x)}[g(z)])^{2} \big)/\big( t(1-t)(\mathbb{E}_{q_{\phi}(z|x,u)}[g(z)]-\mathbb{E}_{q_{\phi}(z|x)}[g(z)])^{2} \\
    & \quad + (1-t) Var_{q_{\phi}(z|x)}[g(z)] + t Var_{q_{\phi}(z|x,u)}[g(z)] \big) \bigg)dt.
\end{aligned}$}
\end{equation*}
Since $\text{SNR}(g) \geq 1/\epsilon$,
\begin{equation}\label{eqn:lem2_2}
\resizebox{1.0\hsize}{!}{
$\begin{aligned}
\mathcal{D}_{\text{KL}}(q_{\phi}(z|x,u)||\alpha q_{\phi}(z|x) + (1-\alpha) q_{\phi}(z|x,u)) \geq \int_{0}^{\alpha} \frac{t}{t(1-t) + \epsilon}dt.\end{aligned}$}
\end{equation}
In a similar way, we can derive
\begin{equation}\label{eqn:lem2_3}
\resizebox{1.0\hsize}{!}{
$\begin{aligned}
\mathcal{D}_{\text{KL}}(q_{\phi}(z|x)||\alpha q_{\phi}(z|x) + (1-\alpha) q_{\phi}(z|x,u)) \geq \int_{\alpha}^{1} \frac{1-t}{t(1-t) + \epsilon}dt.\end{aligned}$}
\end{equation}
By \eqref{eqn:lem2_1}, \eqref{eqn:lem2_2}, and \eqref{eqn:lem2_3}, the proof is concluded.
\end{proof}

\begin{lemma}\label{lemA.3} The first and second partial derivatives of $\text{LB}(\alpha, \epsilon, \Delta_{1-0}(x, u))$ w.r.t. $\alpha$, respectively, can be expressed as
\begin{equation*}
\resizebox{1.0\hsize}{!}{
$\begin{aligned}
    \frac{\partial\text{LB}(\alpha, \epsilon, \Delta_{1-0}(x, u))}{\partial \alpha} = \Delta_{1-0}(x, u) + \frac{1}{\sqrt{1+4\epsilon}}\log\abs{\frac{\alpha - \frac{1+\sqrt{1+4\epsilon}}{2}}{\alpha - \frac{1-\sqrt{1+4\epsilon}}{2}}}
\end{aligned}$
}
\end{equation*}
and $\partial^{2}\text{LB}(\alpha, \epsilon, \Delta_{1-0}(x, u))/\partial \alpha^{2} = -1/\big(\alpha(1-\alpha)+\epsilon\big)$.
Thus, $\text{LB}(\alpha, \epsilon, \Delta_{1-0}(x, u))$ is strictly concave w.r.t. $\alpha$ when $\alpha \in [0, 1]$.
\end{lemma}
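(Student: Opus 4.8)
The plan is to differentiate the explicit integral expression for $\text{LB}$ given in Lemma \hyperref[lemA.2]{3} twice, exploiting a cancellation in the first derivative and a symmetry of the integrand to pin down a constant of integration. Write $\Delta := \Delta_{1-0}(x,u)$ and abbreviate $I_{1}(\alpha) := \int_{\alpha}^{1} \frac{1-t}{t(1-t)+\epsilon}\,dt$ and $I_{2}(\alpha) := \int_{0}^{\alpha} \frac{t}{t(1-t)+\epsilon}\,dt$, so that $\text{LB}(\alpha,\epsilon,\Delta) = \alpha\Delta + \alpha I_{1}(\alpha) + (1-\alpha)I_{2}(\alpha)$.

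First I would compute $\partial\text{LB}/\partial\alpha$ by the product rule together with the fundamental theorem of calculus, using $I_{1}'(\alpha) = -\frac{1-\alpha}{\alpha(1-\alpha)+\epsilon}$ and $I_{2}'(\alpha) = \frac{\alpha}{\alpha(1-\alpha)+\epsilon}$. The crucial observation is that the two boundary-derivative contributions combine as $\alpha I_{1}'(\alpha) + (1-\alpha)I_{2}'(\alpha) = \frac{-\alpha(1-\alpha)+(1-\alpha)\alpha}{\alpha(1-\alpha)+\epsilon} = 0$, so they cancel and leave the clean form $\partial\text{LB}/\partial\alpha = \Delta + I_{1}(\alpha) - I_{2}(\alpha)$.

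Next I would obtain the second derivative immediately by differentiating once more: $\partial^{2}\text{LB}/\partial\alpha^{2} = I_{1}'(\alpha) - I_{2}'(\alpha) = -\frac{(1-\alpha)+\alpha}{\alpha(1-\alpha)+\epsilon} = -\frac{1}{\alpha(1-\alpha)+\epsilon}$, which is exactly the claimed second-derivative formula. To recover the closed form of the first derivative I would integrate this back up: factoring the denominator as $\alpha(1-\alpha)+\epsilon = -(\alpha-r_{+})(\alpha-r_{-})$ with $r_{\pm} = \frac{1\pm\sqrt{1+4\epsilon}}{2}$ (so that $r_{+}-r_{-} = \sqrt{1+4\epsilon}$), a partial-fraction integration gives $I_{1}(\alpha)-I_{2}(\alpha) = \frac{1}{\sqrt{1+4\epsilon}}\log\big|\frac{\alpha-r_{+}}{\alpha-r_{-}}\big| + C$ for some constant $C$.

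The one genuine obstacle is fixing $C$ without evaluating the integrals directly. Here I would use the substitution $t\mapsto 1-t$, under which the denominator $t(1-t)+\epsilon$ is invariant and the two numerators swap; this yields $I_{1}(\alpha) = I_{2}(1-\alpha)$, hence $(I_{1}-I_{2})(1/2) = 0$. Since the log term also vanishes at $\alpha=1/2$ (the ratio inside has modulus $1$ there), we conclude $C=0$, giving the stated first-derivative formula. Finally, strict concavity on $[0,1]$ is immediate from the second-derivative formula: for $\epsilon>0$ we have $\alpha(1-\alpha)+\epsilon>0$ throughout $[0,1]$, so $\partial^{2}\text{LB}/\partial\alpha^{2}<0$ everywhere on the interval.
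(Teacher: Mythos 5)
Your proof is correct, and it reaches the paper's formulas by a mildly different route than the paper's own proof. The paper starts from the same point you do — it asserts $\partial\text{LB}/\partial\alpha = \Delta_{1-0}(x,u) + \int_{\alpha}^{1}\frac{1-t}{t(1-t)+\epsilon}\,dt - \int_{0}^{\alpha}\frac{t}{t(1-t)+\epsilon}\,dt$, the boundary-term cancellation that you make explicit via the product rule — but then it derives the closed form of the \emph{first} derivative directly, by writing the partial-fraction antiderivative with roots $t_{\pm}=\frac{1\pm\sqrt{1+4\epsilon}}{2}$ and evaluating the definite integrals, and only afterwards differentiates that closed form to obtain the second derivative. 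You reverse the order: the second derivative falls out in one line from the fundamental theorem of calculus, and you then integrate back up, fixing the constant of integration by the reflection symmetry $t\mapsto 1-t$ (your $I_{1}(\alpha)=I_{2}(1-\alpha)$) evaluated at $\alpha=1/2$, where the log ratio vanishes. Both computations are elementary calculus with the same partial-fraction decomposition, but your organization buys something concrete: the paper's evaluation leans on the claim that $\int\frac{1-t}{t(1-t)+\epsilon}\,dt = \int\frac{t}{t(1-t)+\epsilon}\,dt$, which is false as stated for indefinite integrals (the two antiderivatives differ by $\log\lvert t(1-t)+\epsilon\rvert$) and is only valid when read as the definite-integral reflection identity; your substitution argument is exactly the rigorous version of that step, so your write-up is the cleaner of the two. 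The concavity conclusion is also handled correctly: $\alpha(1-\alpha)+\epsilon>0$ on $[0,1]$ for $\epsilon>0$, so the second derivative is strictly negative throughout.
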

\begin{proof}[Proof of Lemma 4]
The  $\partial \text{LB}(\alpha, \epsilon, \Delta_{1-0}(x, u))/\partial \alpha$ can be expressed as $\Delta_{1-0}(x, u) + \int_{\alpha}^{1} (1-t)/(t(1-t) + \epsilon) dt - \int_{0}^{\alpha}t/(t(1-t) + \epsilon)dt$. Let $t_{+}=\frac{1+\sqrt{1+4\epsilon}}{2}$ and $t_{-}=\frac{1-\sqrt{1+4\epsilon}}{2}$. Since $\int \frac{t}{t(1-t)+\epsilon}dt=-\frac{t_{+}}{t_{+}-t_{-}}\log \abs{t-t_{+}} + \frac{t_{-}}{t_{+}-t_{-}} \log \abs{t-t_{-}} + C$ where $C$ is the constant of integration and $\int (1-t)/(t(1-t)+\epsilon) dt = \int t/(t(1-t)+\epsilon) dt$, we can derive
\begin{equation*}
\resizebox{1.0\hsize}{!}{
$\begin{aligned}
\frac{\partial\text{LB}(\alpha, \epsilon, \Delta_{1-0}(x, u))}{\partial \alpha} = \Delta_{1-0}(x, u) + \frac{1}{\sqrt{1+4\epsilon}}\log\abs{\frac{\alpha - \frac{1+\sqrt{1+4\epsilon}}{2}}{\alpha - \frac{1-\sqrt{1+4\epsilon}}{2}}}.
\end{aligned}$
}
\end{equation*}
By differentiating the first derivative w.r.t. $\alpha$ again, we have $\partial^{2}\text{LB}(\alpha, \epsilon, \Delta_{1-0}(x, u))/\partial \alpha^{2} = -1/\big( \alpha(1-\alpha)+\epsilon \big)$.
\end{proof}

\begin{lemma}\label{lemA.4}The maximizer of $\text{LB}(\alpha, \epsilon, \Delta_{1-0}(x, u))$ over $\alpha \in [0, 1]$ is $\alpha^{*}_{\text{approx}}(\epsilon, \Delta_{1-0}(x,u)):= \frac{1-\sqrt{1+4\epsilon}}{2} + \frac{\sqrt{1+4\epsilon}}{1+e^{-\sqrt{1+4\epsilon}\Delta_{1-0}(x,u)}}$ if and only if $\abs{\Delta_{1-0}(x,u)} \leq \frac{1}{\sqrt{1+4\epsilon}} \log \frac{(\sqrt{1+4\epsilon}+1)^{2}}{4\epsilon} = -\log\epsilon+O(\epsilon \log\epsilon)$ as $\epsilon \to 0^{+}$.
\end{lemma}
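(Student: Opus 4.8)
The plan is to leverage the strict concavity of $\text{LB}(\cdot,\epsilon,\Delta_{1-0}(x,u))$ on $[0,1]$ already established in Lemma \hyperref[lemA.3]{4}. A strictly concave differentiable function on an interval attains its maximum at its unique stationary point exactly when that point lies in the interval, and at a boundary otherwise; since the constrained maximizer always lies in $[0,1]$, the entire claim reduces to showing that the stationary point of $\text{LB}$ equals $\alpha^{*}_{\text{approx}}(\epsilon,\Delta_{1-0}(x,u))$ and that this point belongs to $[0,1]$ if and only if $|\Delta_{1-0}(x,u)|$ does not exceed the stated threshold.

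First I would solve $\partial\text{LB}/\partial\alpha=0$ using the first-derivative formula of Lemma \hyperref[lemA.3]{4}. Writing $\beta:=\sqrt{1+4\epsilon}$ and $t_{\pm}:=\frac{1\pm\beta}{2}$, note that $\beta>1$ forces $t_{-}<0\le\alpha\le1<t_{+}$ for every $\alpha\in[0,1]$, so the logarithm's argument $\frac{t_{+}-\alpha}{\alpha-t_{-}}$ is positive and the absolute value may be dropped. Setting the derivative to zero yields $\frac{t_{+}-\alpha}{\alpha-t_{-}}=e^{-\beta\Delta_{1-0}(x,u)}$, a linear equation whose solution $\alpha=\big(t_{+}+e^{-\beta\Delta_{1-0}(x,u)}t_{-}\big)/\big(1+e^{-\beta\Delta_{1-0}(x,u)}\big)$ rearranges to $\frac{1-\beta}{2}+\frac{\beta}{1+e^{-\beta\Delta_{1-0}(x,u)}}$, i.e.\ precisely $\alpha^{*}_{\text{approx}}$; strict concavity guarantees this stationary point is unique.

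Next I would decide membership in $[0,1]$ by rewriting $\alpha^{*}_{\text{approx}}=\frac{1}{2}\big(1+\beta\tanh(\beta\Delta_{1-0}(x,u)/2)\big)$, which is strictly increasing in $\Delta_{1-0}(x,u)$ with limits $t_{-}<0$ and $t_{+}>1$. Hence $\alpha^{*}_{\text{approx}}\in[0,1]$ iff $|\tanh(\beta\Delta_{1-0}(x,u)/2)|\le 1/\beta$, that is $|\Delta_{1-0}(x,u)|\le\frac{2}{\beta}\,\mathrm{arctanh}(1/\beta)=\frac{1}{\beta}\log\frac{\beta+1}{\beta-1}$; using the identity $\beta^{2}-1=4\epsilon$ to rewrite $\frac{\beta+1}{\beta-1}=\frac{(\beta+1)^{2}}{4\epsilon}$ recovers the stated threshold. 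Combining the two steps with the concavity dichotomy delivers the equivalence: under the threshold the (possibly boundary) stationary point $\alpha^{*}_{\text{approx}}$ is the maximizer, while beyond it $\alpha^{*}_{\text{approx}}\notin[0,1]$ so the maximizer, lying in $[0,1]$, cannot equal it.

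The asymptotic expansion $\frac{1}{\beta}\log\frac{(\beta+1)^{2}}{4\epsilon}=-\log\epsilon+O(\epsilon\log\epsilon)$ as $\epsilon\to0^{+}$ then follows from $\beta=1+2\epsilon+O(\epsilon^{2})$, giving $(\beta+1)^{2}/4=1+2\epsilon+O(\epsilon^{2})$ and $1/\beta=1-2\epsilon+O(\epsilon^{2})$, so that $\frac{1}{\beta}\big(-\log\epsilon+\log(1+2\epsilon+\cdots)\big)=-\log\epsilon+O(\epsilon\log\epsilon)$. I expect the algebra to be routine; the only genuine care-points are correctly discarding the absolute value by the sign analysis $t_{-}<0<1<t_{+}$, and phrasing the concavity dichotomy sharply enough that both directions of the ``if and only if'' are immediate.
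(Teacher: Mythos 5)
Your proof is correct and follows essentially the same route as the paper's: both use the first-derivative formula and strict concavity from Lemma \hyperref[lemA.3]{4} to identify $\alpha^{*}_{\text{approx}}$ as the unique stationary point, reduce the ``if and only if'' to the membership condition $\alpha^{*}_{\text{approx}}(\epsilon,\Delta_{1-0}(x,u))\in[0,1]$, and then establish the asymptotic expansion of the threshold. The only cosmetic differences are that you carry out the sign analysis and $\tanh$ reformulation explicitly (algebra the paper leaves implicit) and obtain the $-\log\epsilon+O(\epsilon\log\epsilon)$ expansion by Taylor series where the paper uses L'Hospital's rule.
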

\begin{proof}[Proof of Lemma 5]
By Lemma \hyperref[lemA.3]{4}, the first partial derivative of $\text{LB}(\alpha, \epsilon, \Delta_{1-0}(x, u))$ w.r.t. $\alpha$ is zero if and only if $\alpha=\alpha_{\text{approx}}^{*}(\epsilon, \Delta_{1-0}(x,u))$. The solution of $\alpha_{\text{approx}}^{*}(\epsilon, \Delta_{1-0}(x,u)) \in [0, 1]$ can be expressed as $\abs{\Delta_{1-0}(x,u)} \leq \frac{1}{\sqrt{1+4\epsilon}} \log \frac{(\sqrt{1+4\epsilon}+1)^{2}}{4\epsilon}$.
Next, we prove $\frac{1}{\sqrt{1+4\epsilon}} \log \frac{(\sqrt{1+4\epsilon}+1)^{2}}{4\epsilon} = -\log\epsilon+O(\epsilon \log\epsilon)$ as $\epsilon \to 0^{+}$.
We have
\begin{equation*}
\resizebox{0.8\hsize}{!}{
$\begin{aligned}
&\Big( \frac{1}{\sqrt{1+4\epsilon}} \log \frac{(\sqrt{1+4\epsilon}+1)^{2}}{4\epsilon} - (-\log \epsilon) \Big) \frac{1}{\epsilon\log\epsilon}\\
&= \frac{4}{\sqrt{1+4\epsilon}(\sqrt{1+4\epsilon}+1)}-\frac{2}{\sqrt{1+4\epsilon}}\frac{\log 2/(\sqrt{1+4\epsilon}+1)}{\epsilon \log \epsilon}.\end{aligned}$}
\end{equation*}
Here, the first term in RHS converges to $2$ as $\epsilon \to 0^{+}$ and, by L'Hospital's rule, the limit of the second term is $\underset{\epsilon \to 0^{+}}{\text{lim}}\frac{\log 2/(\sqrt{1+4\epsilon}+1)}{\epsilon \log \epsilon} = \underset{\epsilon \to 0^{+}}{\text{lim}}\frac{-2/(\sqrt{1+4\epsilon}+1)\sqrt{1+4\epsilon}}{\log \epsilon + 1}=0$, which concludes the proof.
\end{proof}

Now, we prove Theorem \hyperref[thm:posivier_margin]{3}. Let $t_{+}=\frac{1+\sqrt{1+4\epsilon}}{2}$ and $t_{-}=\frac{1-\sqrt{1+4\epsilon}}{2}$. Since $\int \frac{t}{t(1-t)+\epsilon}dt=-\frac{t_{+}}{t_{+}-t_{-}}\log \abs{t-t_{+}} + \frac{t_{-}}{t_{+}-t_{-}} \log \abs{t-t_{-}} + C$ where $C$ is the constant of integration and $\int (1-t)/(t(1-t)+\epsilon) dt = \int t/(t(1-t)+\epsilon) dt$, we can derive
\begin{equation*}
\resizebox{1.0\hsize}{!}{
$\begin{aligned}
    &\text{LB}(\alpha, \epsilon, \Delta_{1-0}(x,u))\\
    &=\alpha \Delta_{1-0}(x,u) -\frac{\alpha t_{+}}{t_{+}-t_{-}} \log \abs{\alpha-1+t_{+}}+\frac{\alpha t_{-}}{t_{+}-t_{-}} \log \abs{\alpha-1+t_{-}}\\
    &\quad-\frac{(1-\alpha) t_{+}}{t_{+}-t_{-}} \log \abs{\alpha-t_{+}}+\frac{(1-\alpha) t_{-}}{t_{+}-t_{-}} \log \abs{\alpha-t_{-}}\\
    &\quad+ \frac{t_{+}}{t_{+}-t_{-}} \log \abs{t_{+}} - \frac{t_{-}}{t_{+}-t_{-}} \log \abs{t_{-}}\\
    &=\alpha \Delta_{1-0}(x,u) +\frac{1}{t_{+}-t_{-}} (\alpha-t_{+})\log \abs{\alpha-t_{+}}-\frac{1}{t_{+}-t_{-}} (\alpha-t_{-})\log \abs{\alpha-t_{-}}\\
    &\quad+ \frac{t_{+}}{t_{+}-t_{-}} \log \abs{t_{+}} - \frac{t_{-}}{t_{+}-t_{-}} \log \abs{t_{-}}.
\end{aligned}$}
\end{equation*}
Here, the last equality is derived by using $t_{+}+t_{-}=1$. By Lemma \hyperref[lemA.4]{5}, the maximizer is $\alpha^{*}_{\text{approx}}(\epsilon, \Delta_{1-0}(x,u))=t_{-}+\sqrt{1+4\epsilon}\sigma(\sqrt{1+4\epsilon}\Delta_{1-0}(x,u))$ where $\sigma(x):=1/(1+e^{-x})$ is the sigmoid function, so $\alpha_{\text{approx}}^{*}(\epsilon, \Delta_{1-0}(x,u))-t_{+}=-(t_{+}-t_{-})(1-\sigma((t_{+}-t_{-})\Delta_{1-0}(x,u)))$ and $\alpha^{*}_{\text{approx}}(\epsilon, \Delta_{1-0}(x,u))-t_{-}=(t_{+}-t_{-})\sigma((t_{+}-t_{-})\Delta_{1-0}(x,u))$. Now, substituting these equations and $(t_{+}-t_{-})\Delta_{1-0}(x,u)=\log \sigma((t_{+}-t_{-})\Delta_{1-0}(x,u))/(1-\sigma((t_{+}-t_{-})\Delta_{1-0}(x,u)))$ gives
\begin{equation*}\resizebox{1.0\hsize}{!}{
$\begin{aligned}
    &\text{LB}(\alpha_{\text{approx}}^{*}(\epsilon, \Delta_{1-0}(x,u)), \epsilon, \Delta_{1-0}(x,u))\\
    &=t_{-} \Delta_{1-0}(x,u) + \sigma((t_{+}-t_{-})\Delta_{1-0}(x,u)) \log \frac{\sigma((t_{+}-t_{-})\Delta_{1-0}(x,u))}{1-\sigma((t_{+}-t_{-})\Delta_{1-0}(x,u))}\\
    &\quad -\sigma((t_{+}-t_{-})\Delta_{1-0}(x,u)) \log \Big( (t_{+}-t_{-})\sigma((t_{+}-t_{-})\Delta_{1-0}(x,u)) \Big)\\
    &\quad-(1-\sigma((t_{+}-t_{-})\Delta_{1-0}(x,u))) \log \Big( (t_{+}-t_{-})(1-\sigma((t_{+}-t_{-})\Delta_{1-0}(x,u))) \Big)\\
    &\quad+ \frac{t_{+}}{t_{+}-t_{-}} \log \abs{t_{+}} - \frac{t_{-}}{t_{+}-t_{-}} \log \abs{t_{-}}\\
    &=t_{-} \Delta_{1-0}(x,u)-\log(1-\sigma((t_{+}-t_{-})\Delta_{1-0}(x,u)))\\
    &\quad +\Big( -\log\abs{t_{+}-t_{-}}+\frac{t_{+}}{t_{+}-t_{-}} \log \abs{t_{+}} - \frac{t_{-}}{t_{+}-t_{-}} \log \abs{t_{-}} \Big)\\
    &=\frac{1+\sqrt{1+4\epsilon}}{2} \Delta_{1-0}(x,u)-\log \sigma (\sqrt{1+4\epsilon}\Delta_{1-0}(x,u)))\\
    &\quad +\Big( -\log(\sqrt{1+4\epsilon})+ \frac{1}{2\sqrt{1+4\epsilon}} \log \frac{(\sqrt{1+4\epsilon}+1)^{2}}{4\epsilon} + \frac{1}{2}\log \epsilon \Big).
\end{aligned}$}
\end{equation*}
Thus, 
\begin{equation*}\resizebox{1.0\hsize}{!}{
$\begin{aligned}
    &\underset{\alpha \in [0, 1]}{\text{sup}}\text{ELBO}_{\theta, \phi}(\alpha; x, u)-\text{ELBO}_{\theta, \phi}(0; x,u)\\
    &\geq \text{LB}(\alpha_{\text{approx}}^{*}(\epsilon, \Delta_{1-0}(x,u)), \epsilon, \Delta_{1-0}(x,u))\\
    & = \frac{1+\sqrt{1+4\epsilon}}{2} \Delta_{1-0}(x,u)-\log \sigma (\sqrt{1+4\epsilon}\Delta_{1-0}(x,u))) + O(\epsilon \log \epsilon)
\end{aligned}$}
\end{equation*}
as $\epsilon \to 0^{+}$. Now, $-\log \sigma (\sqrt{1+4\epsilon}\Delta_{1-0}(x,u)))=o(\Delta_{1-0}(x,u))$ as $\Delta_{1-0}(x,u) \to \infty$ and $-\log \sigma (\sqrt{1+4\epsilon}\Delta_{1-0}(x,u)))=-\sqrt{1+4\epsilon}\Delta_{1-0}(x,u)+o(\Delta_{1-0}(x,u))$ as $\Delta_{1-0}(x,u) \to -\infty$ conclude the proof.

\section{Details on Experiments}\label{app.B}
\subsection{Implementation Details}\label{app.B.1}
\subsubsection{Dataset Description and Experimental Setting}
\begin{table*}[t]
\caption{A summary of distributions of variables in the simulation study. Examples can be founded at the first column in Figure \hyperref[fig:simulation_results]{1} in the manuscript.}
\label{tab:simulation_description}
\vskip 0.15in
\begin{center}
\begin{small}
\begin{sc}
\resizebox{2.0\columnwidth}{!}{
\begin{tabular}{ccccc}
\toprule
Latent structure &  & \multicolumn{3}{c}{Variables} \\ \cline{3-5}
 &  & Covariates ($U$)      & \begin{tabular}[c]{@{}c@{}}Latent variables given \\ covariates ($Z|U$)\end{tabular}       & \begin{tabular}[c]{@{}c@{}}Observations given \\ latent variables ($X|Z$)\end{tabular} \\ \hline \hline
Sine             &  & $\text{Unif}(0, 2\pi)$       & $N\big( (U, 2\text{sin}U)^{T}, (U/4\pi)I_{2}\big)$  & $N(\text{RealNVP}(Z), I_{100})$ \\
Quadratic        &  & $\text{Unif}(-\pi/2, \pi/2)$ &  $N\big((U, U^{2})^{T}, (2U+\pi)/4\pi I_{2}\big)$  &$N(\text{RealNVP}(Z), I_{100})$ \\
Two circles      &  & $\text{Unif}(-\pi, \pi) \times \text{Cat}_{2}(0.5, 0.5)$  & $N\big((U_{2}\text{cos}U_{1}, U_{2}\text{sin}U_{1})^{T}, (-\vert U_{1} \vert+\pi)/10\pi I_{2}\big)$ &  $N(\text{RealNVP}(Z), I_{100})$        \\ \bottomrule
\end{tabular}}
\end{sc}
\end{small}
\end{center}
\vskip -0.1in
\end{table*}
We present in Table \hyperref[tab:simulation_description]{1} the three data generation schemes used in the simulation study, which include: 1) distributions of covariates ($U$), 2) conditional distributions of latent variables given covariates ($Z|U$), and 3) conditional distributions of observations given latent variables ($X|Z$). Here, uniform and categorical distributions are denoted by Unif and Cat, respectively, and RealNVP \citep{dinh2016density} is a flexible and invertible neural network mapping low-dimensional latent variables to high-dimensional observations. As in \cite{zhou2020learning}, we use randomly initialized RealNVP networks as ground-truth mixing functions. The sample size is $30,000$ and the proportion of training, validation, and test samples are 80\%, 10\%, and 10\%, respectively. The dimension of observations is $100$. The number of repeats is $20$, and for all datasets and methods, we train five models with different initial weights. All reported results are from models yielding the minimum validation loss and evaluated on the test dataset.

We provide descriptions on real datasets with implementation details.

\textbf{EMNIST}: An image dataset consisting of handwritten digits whose data format is the same as MNIST \citep{lecun1998mnist} and has six split types. We use EMNIST split by digits to use images as observations ($X$) and digit labels as covariates ($U$). The official training dataset contains 240,000 images of digits from $0$ to $9$ in $28 \times 28$ gray-scale, and the test dataset contains 40,000 images. We randomly split the official training images by 200,000 and 40,000 images to make training and validation datasets for our experiments, and the number of repeats is $20$.

\textbf{Fashion-MNIST}: An image dataset consisting of fashion-item images with item labels. There are ten classes such as ankle boot, bag, and coat, and we use images as observations ($X$) and item labels as covariates ($U$). The official training and test datasets contain 60,000 and 10,000 images in $28 \times 28$ gray-scale, respectively, and we randomly split the official training images by 50,000 and 10,000 images to make training and validation datasets. The number of repeats is $20$.

\textbf{ABCD}: The ABCD study recruited 11,880 children aged 9–10 years (and their parents/guardians) were across 22 sites with 10-year-follow-up. For this analysis, we are using the baseline measures. After list-wise deletion for missing values, the sample size is 5,053, and the dimension of observations is 1,178. We conduct $5$-fold cross-validation. For all data splits and methods, we train four models with different initial weights. All reported results are from the model yielding the minimum loss on the validation fold and evaluated on the test fold.

\begin{table}[t]
\caption{Contingency tables to display the number of data by their $\alpha^{*}$ computed by grid search (column) and formula (row) on sine latent structure. Correlation coefficients between $\alpha^{*}$ by grid search and by formula are presented at the top-left corner.}
\label{tab:simulation_alpha_table}
\vskip 0.15in
\begin{center}
\begin{small}
\begin{sc}
{    % for making group where "\makegapedcells" is valid
    %\makegapedcells
    \resizebox{1.0\columnwidth}{!}{
    \begin{tabular}{cc|ccc}
    \toprule
    \multicolumn{5}{l}{Latent structure = Sine} \\ \hline
    \multicolumn{2}{c}{} & \multicolumn{3}{c}{Formula} \\
        &   \begin{tabular}[c]{@{}l@{}}Correlation coefficient:\\ 0.99 (0.00)\end{tabular} &   $0$ &  In-between $0$ and $1$ & $1$ \\ 
        \cline{2-5}
    \multirow{2}{*}{\rotatebox[origin=c]{90}{Grid}}
        & $0$  & 26.85\% (0.18\%) & 1.88\% (0.06\%) & 0.00\% (0.00\%) \\
        & In-between $0$ and $1$ & 0.19\% (0.01\%)    & 0.14\% (0.02\%)      & 0.01\% (0.00\%) \\
        & $1$ & 0.00\% (0.00\%)     & 1.99\% (0.05\%)      & 68.95\% (0.19\%) \\ \hline \hline
        
        \end{tabular}
        }
    \resizebox{1.0\columnwidth}{!}{
    \begin{tabular}{cc|ccc}
    \multicolumn{5}{l}{Latent structure = Quadratic} \\ \hline 
    \multicolumn{2}{c}{} & \multicolumn{3}{c}{Formula} \\
        &    \begin{tabular}[c]{@{}l@{}}Correlation coefficient:\\ 0.99 (0.00)\end{tabular}   &   $0$ &  In-between $0$ and $1$ & $1$ \\ 
        \cline{2-5}
    \multirow{2}{*}{\rotatebox[origin=c]{90}{Grid}}
        & $0$  & 30.95\% (0.19\%) & 2.61\% (0.06\%) & 0.00\% (0.00\%) \\
        & In-between $0$ and $1$ & 0.30\% (0.02\%)    & 0.28\% (0.02\%)      & 0.04\% (0.01\%) \\
        & $1$ & 0.00\% (0.00\%)     & 2.68\% (0.08\%)      & 63.14\% (0.26\%) \\ \hline \hline
        
        \end{tabular}
        }
    \resizebox{1.0\columnwidth}{!}{
    \begin{tabular}{cc|ccc}
    \multicolumn{5}{l}{Latent structure = Two circles} \\ \hline
    \multicolumn{2}{c}{} & \multicolumn{3}{c}{Formula} \\
        &    \begin{tabular}[c]{@{}l@{}}Correlation coefficient:\\ 0.99 (0.00) \end{tabular}   &   $0$ &  In-between $0$ and $1$ & $1$ \\ 
        \cline{2-5}
    \multirow{2}{*}{\rotatebox[origin=c]{90}{Grid}}
       & $0$  & 36.05\% (0.19\%) & 3.44\% (0.08\%) & 0.00\% (0.00\%) \\
        & In-between $0$ and $1$ & 0.39\% (0.02\%)    & 0.29\% (0.02\%)      & 0.02\% (0.01\%) \\
        & $1$ & 0.00\% (0.00\%)     & 3.46\% (0.08\%)      & 56.36\% (0.27\%) \\ \hline
        
        \end{tabular}
        }
     }
\end{sc}
\end{small}
\end{center}
\vskip -0.1in
\end{table}

\begin{figure}[t]
\vskip 0.2in
\begin{center}
\centerline{\includegraphics[width=1.0\columnwidth]{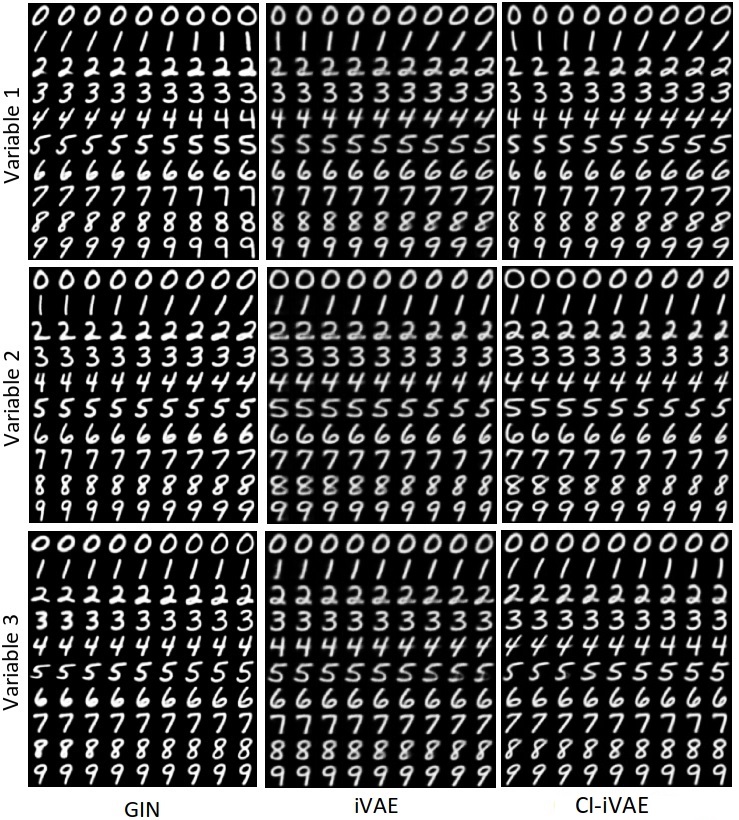}}
\end{center}
\caption{Generation results on EMNIST by varying top three latent attributes having the largest standard deviations. We calculate mean vector of latent variables and controlling the selected attribute from $-2$ to $+2$ standard deviations.}
\label{fig:variation_plot_EMNIST}
\vskip -0.2in
\end{figure}

\begin{figure}[t]
\vskip 0.2in
\begin{center}
\centerline{\includegraphics[width=1.0\columnwidth]{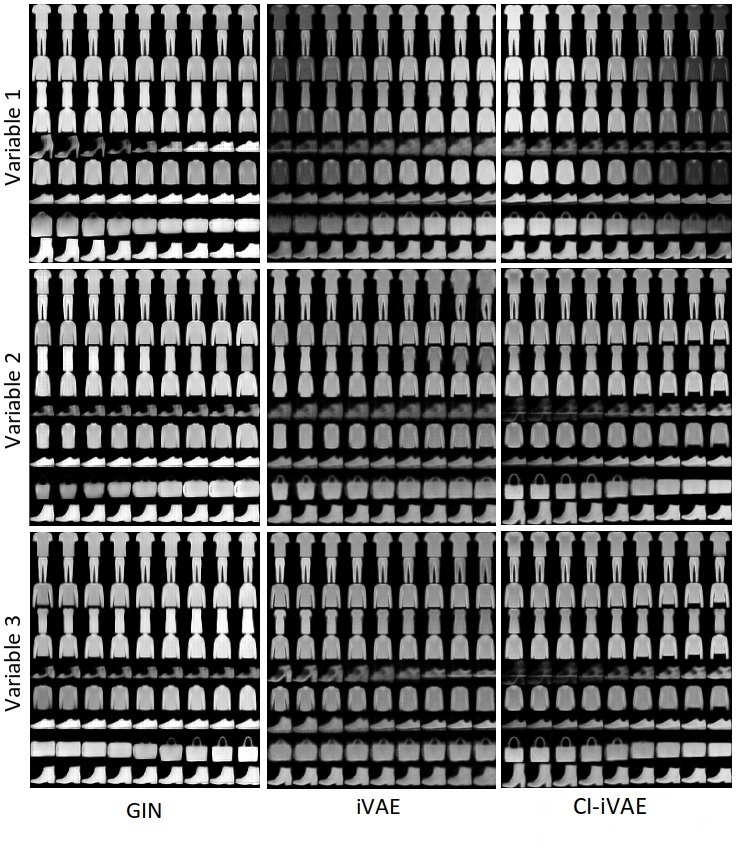}}
\end{center}
\caption{Generation results on Fashion-MNIST by varying top three latent attributes having the largest standard deviations. We calculate mean vector of latent variables and controlling the selected attribute from $-2$ to $+2$ standard deviations.}
\label{fig:variation_plot_Fashion_MNIST}
\vskip -0.2in
\end{figure}

\subsubsection{Network Architectures}
In all experiments, iVAEs and CI-iVAEs use the same architectures of the label prior, encoder, and decoder networks for the purpose of fair comparison.

In the simulation study, we modify the official implementation code of pi-VAE. The architectures of label prior and encoder networks are Dense(60)-Tanh-Dense(60)-Tanh-Dense(2) for the sine latent structure and Dense(60)-Tanh-Dense(60)-Tanh-Dense(60)-Tanh-Dense(2) for quadratic and two circles latent structures. As in pi-VAE, we assume $q(z|x,u) \propto q_{\phi}(z|x)p_{T,\lambda}(z|u)$. The $q(z|x,u)$ is a Gaussian distribution since both label prior and encoder are Gaussian. The means and variances of $q(z|x,u)$ can be computed with those of label prior and encoder. The architecture of the decoder is the same as the modified GIN used in pi-VAE to guarantee injectivity. We use Adam optimizer \citep{kingma2014adam}. The number of epochs, batch size, and the learning rate is $100$, $300$, and $5 \times 10^{-4}$, respectively.

In experiments on EMNIST and Fashion-MNIST datasets, we modify the official implementation code of GIN. For GIN, we use the same architecture used in the GIN paper. For iVAEs and CI-iVAEs, the architecture of encoders is Conv(32, 3, 1, 1)-BN-LReLU-Conv(64, 4, 2, 1)-BN-LReLU-Conv(128, 4, 2, 1)-BN-LReLU-Conv(128, 7, 1, 0)-BN-LReLU-Dense(64), that of decoders is ConvTrans(128, 1, 1, 0)-BN-LReLU-ConvTrans(128, 7, 1, 0)-BN-LReLU-ConvTrans(64, 4, 2, 1)-BN-LReLU-ConvTrans(32, 4, 2, 1)-BN-LReLU-ConvTrans(1, 3, 1, 1)-Sigmoid, and that of the label prior is Dense(256)-LReLU-Dense(256)-LReLU-Dense(64). Here, Conv($f$, $k$, $s$, $p$) and ConvTrans($f$, $k$, $s$, $p$) denote the convolution layer and transposed convolution layer \citep{zeiler2010deconvolutional}, respectively, where $f$, $k$, $s$, and $p$ are the number of output channel, kernel size, stride, and padding, respectively. BN denotes the batch normalization layer \citep{ioffe2015batch}, and LReLU denotes the Leaky ReLU activation layer \citep{xu2015empirical}. The initialized decoders are not injective, but our objective functions encourage them to be injective by enforcing the inverse relation between encoders and decoders. The number of learnable parameters of GIN architectures is 2,620,192, and that of iVAEs and CI-iVAEs is 2,062,209. We use Adam optimizer. The number of epochs and batch size is $100$ and $240$, respectively. The learning rate is $3 \times 10^{-4}$ for the first $50$ epochs and is $3 \times 10^{-5}$ for the remaining epochs.

In the experiment on the ABCD dataset, the architecture of the label prior is Dense(256)-LReLU-Dense(256)-LReLU-Dense(128), that of encoders is Dense(4096)-BN-LReLU-Dense(4096)-BN-LReLU-Dense(4096)-BN-LReLU-Dense(4096)-BN-LReLU-Dense(128)-BN-LReLU-Dense(128), and that of decoders is Dense(4096)-LReLU-Dense(4096)-LReLU-Dense(4096)-LReLU-Dense(4096)-LReLU-Dense(128)-LReLU-Dense(128). We use Adam optimizer. The number of epochs, batch size, and the learning rate is $100$ and $64$, and $2 \times 10^{-4}$, respectively.

\subsection{Further Experimental Results}\label{app.B.2}
We present further experimental results on the simulation study in Table \hyperref[tab:simulation_alpha_table]{2}.

We present contingency tables for samplewise optimal $\alpha$ computed by grid search and by using approximating formula (Equation \eqref{eq:alpha_formula}) in Table \hyperref[tab:simulation_alpha_table]{2}. For grid search, we calculate $\text{ELBO}_{\theta, \phi}(\alpha; x, u)$ for $\alpha \in \{0, 0.001, ..., 0.999, 1\}$ and pick the maximizer. For formula, we approximate $\epsilon$ with $f(\textbf{z})=z_{j}$ and $f(\textbf{z})=z^{2}_{j}$ for $j=1,...,d_{Z}$ and calculate $\alpha_{\text{approx}}^{*}$. For all three settings, the correlation coefficients are high, which indicates the consistency of $\alpha^*$ from the proposed algorithm with theoretical approximation. Moreover, $\alpha^{*}$ does not degenerate at $0$ or $1$, so the proposed ELBO using samplewise optimal posteriors is different from the two ablation cases, ELBOs with $q_{\phi}(z|x,u)$ and with $q_{\phi}(z|x)$.

Generation results according to attributes having the largest standard deviations are provided in Figures \hyperref[fig:variation_plot_EMNIST]{1} and \hyperref[fig:variation_plot_Fashion_MNIST]{2}. For all methods, the generated result changes as the value of attributes are changed. In Fashion-MNIST, iVAE-based methods change the contrast of fashion items while GIN does not.

\bibliography{ref}

\end{document}